\documentclass[10pt,twocolumn,letterpaper]{article}

\usepackage{cvpr}              
\definecolor{cvprblue}{rgb}{0.21,0.49,0.74}
\usepackage[pagebackref,breaklinks,colorlinks,allcolors=cvprblue]{hyperref}
\usepackage{amsthm}
\usepackage{amsmath}
\usepackage{amsfonts}
\usepackage{colortbl}
\usepackage{multirow}
\newtheorem{theorem}{Theorem}
\theoremstyle{definition}
\newtheorem{definition}{Definition}
\usepackage[table,xcdraw]{xcolor}
\usepackage{arydshln}
\usepackage{tikz}
\usepackage[normalem]{ulem}
\useunder{\uline}{\ul}{}
\usepackage{booktabs}
\usepackage{makecell}
\usepackage[table]{xcolor}   
\usepackage[accsupp]{axessibility} 
\usepackage{hyperref}

\def\paperID{36132} 
\def\confName{CVPR}
\def\confYear{2026}

\title{\textsc{HyCal}: A Training-Free Prototype Calibration Method for\\Cross-Discipline Few-Shot Class-Incremental Learning}

\author{
Eunju Lee$^{*}$\\
Chung-Ang University\\
{\tt\small dmswn5829@cau.ac.kr}
\and
MiHyeon Kim$^{*}$\\
KT Corporation\\
{\tt\small mihyeon.kim@kt.com}
\and
JuneHyoung Kwon, Yoonji Lee\\
Chung-Ang University\\
{\tt\small \{dirchdmltnv,pioneer0305\}@cau.ac.kr}
\and
JiHyun Kim\\
Dentium\\
{\tt\small jhkim20@dentium.com}
\and
Soojin Jang\\
ETRI\\
{\tt\small soojin@etri.re.kr}
\and
YoungBin Kim$^{\dagger}$\\
Chung-Ang University\\
{\tt\small ybkim85@cau.ac.kr}
}

\begin{document}
\maketitle
\begingroup
\renewcommand\thefootnote{}
\footnotetext{$^{*}$ Equal contribution. $^{\dagger}$ Corresponding author.}
\endgroup

\begin{abstract}
Pretrained Vision–Language Models (VLMs) like CLIP show promise in continual learning, but existing Few-Shot Class-Incremental Learning (FSCIL) methods assume homogeneous domains and balanced data distributions, limiting real-world applicability where data arises from heterogeneous disciplines with imbalanced sample availability and varying visual complexity. We identify \textit{Domain Gravity}, a representational asymmetry where data imbalance across heterogeneous domains causes overrepresented or low-entropy domains to disproportionately influence the embedding space, leading to prototype drift and degraded performance on underrepresented or high-entropy domains. To address this, we introduce \textbf{Cross-Discipline Variable Few-Shot Class-Incremental Learning (XD-VSCIL)}, a benchmark capturing real-world heterogeneity and imbalance where Domain Gravity naturally intensifies. We propose \textbf{Hybrid Prototype Calibration (\textsc{HyCal})}, a training-free method combining cosine similarity and Mahalanobis distance to capture complementary geometric properties—directional alignment and covariance-aware magnitude—yielding stable prototypes under imbalanced heterogeneous conditions. Operating on frozen CLIP embeddings, \textsc{HyCal} achieves consistent retention–adaptation improvements while maintaining efficiency. Experiments show \textsc{HyCal} effectively mitigates Domain Gravity and outperforms existing methods in imbalanced cross-domain incremental learning. Code: \href{https://github.com/EJLEE5826/HyCal-CIL} {Link}
\end{abstract}
    
\section{Introduction}
\label{sec:intro}
\begin{figure*}[t]
\centering
\includegraphics[width=0.95\textwidth]{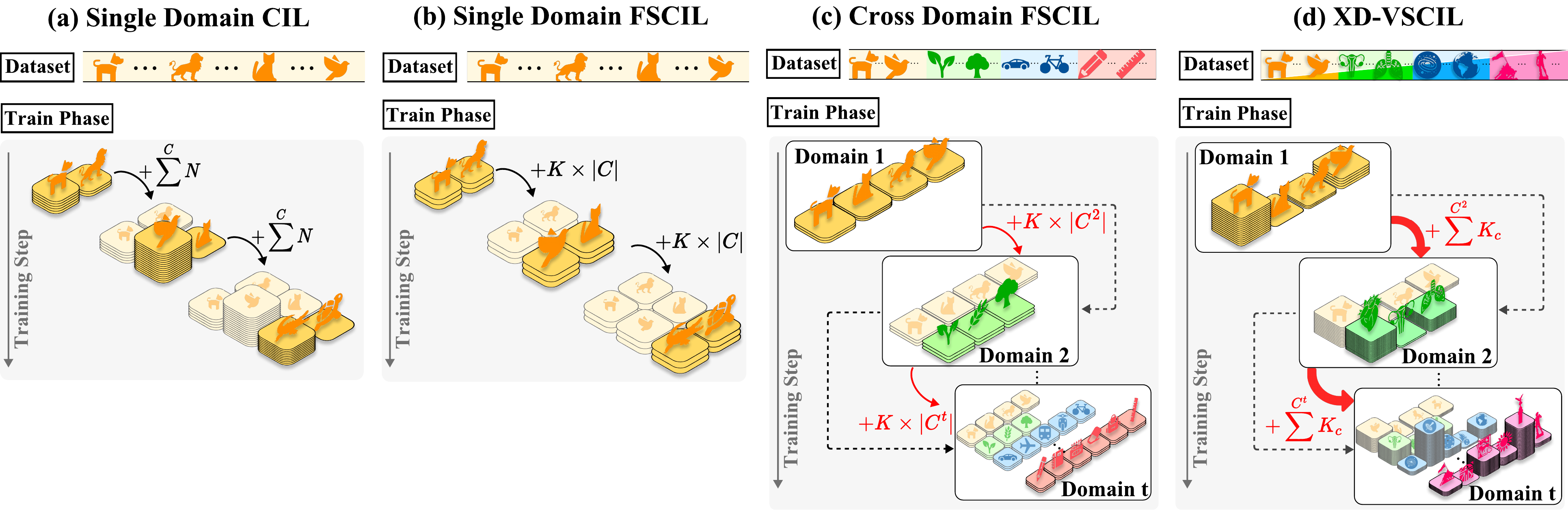}
\caption{Overview of continual learning paradigms.
(a) CIL: Class-incremental learning without domain shift.
(b) FSCIL: Fixed shots per class within a single domain.
(c) Cross-domain FSCIL: Multiple domains with fixed shots and uniform class/sample sizes.
(d) XD-VSCIL (ours): Realistic setting with sequential tasks from diverse domains, varying class counts, and data imbalance.}
\label{fig: teaser}
\end{figure*}

Pretrained Vision–Language Models (VLMs) such as CLIP demonstrate strong continual learning performance across diverse domains~\cite{LI2025103404,sun2025mos,guo2024information,ma2024pip}.
Standard Class-Incremental Learning (CIL), shown in~\cref{fig: teaser} (a), assumes abundant and balanced data without domain shifts. Few-Shot CIL (FSCIL), illustrated in~\cref{fig: teaser} (b), relaxes this assumption by limiting each class to a fixed number of samples~\cite{tao2020few,goswami2024calibrating,feng2025defying}, yet most FSCIL frameworks still presuppose homogeneous visual domains and balanced few-shot distributions~\cite{tao2020few,goswami2024calibrating}.
In real-world settings, data often arises from heterogeneous domains—such as medical imagery, textures, aerial views, or natural scenes—that differ markedly in visual statistics and representational complexity.
These practical considerations have motivated recent extensions of FSCIL toward cross-domain scenarios, as shown in~\cref{fig: teaser} (c), leveraging VLMs’ zero-shot generalization to retain prior knowledge across diverse domains~\cite{xu2024advancing,yu2024boosting,luo2025lada}.

However, most cross-domain FSCIL frameworks still assume fixed few-shot configurations and balanced data distributions~\cite{tao2020few,goswami2024calibrating}, limiting their relevance to real-world conditions. In practice, domain heterogeneity and data imbalance are pervasive: heterogeneous domains differ in visual entropy, feature geometry, and intra-domain consistency, while data-rich or visually coherent domains tend to dominate learning. This combination leads to overfitting, catastrophic forgetting, and ultimately intrinsic representational imbalance, where dominant domains exert disproportionately strong influence on the embedding space and open-world or high-entropy domains remain structurally underrepresented~\cite{10.5555/3524938.3525120,DBLP:journals/tmlr/HuangLYWH24,xu2024advancing,goswami2023fecam}. These issues are further intensified in highly diverse fields—such as healthcare or aerospace—where distribution shifts are large and data availability varies substantially across tasks~\cite{xu2024advancing,yu2024boosting}.

We identify a structural bias emerging when VLMs encounter heterogeneous domain distributions, termed \textit{Domain Gravity}—a representational asymmetry driven by domain-specific statistical properties that accumulate throughout learning~\cite{CLIP,gao2024clip}.

\begin{definition}(Domain Gravity).\\
\textit{Each domain induces a representational potential based on its visual coherence and data density.
Overrepresented or low-entropy domains exert disproportionately strong influence in the shared embedding space.}
\end{definition}

Domain Gravity manifests in two forms. First, pretrained VLMs such as CLIP inherit distributional biases from large-scale corpora, where visually dominant or frequent domains disproportionately shape the embedding geometry~\cite{CLIP}. This produces weaker and less stable representations for \textit{minority} or \textit{high-entropy} domains~\cite{wen2024makes}, making it harder to exploit VLMs’ rich prior knowledge and zero-shot capability as visual disciplines specialize. Second, this bias intensifies when incremental tasks arise from domains with varying entropy, visual structure, or data availability~\cite{xu2024advancing,yu2024boosting}. In cross-domain FSCIL, prototypes and embeddings drift toward visually coherent or data-rich domains, degrading representation quality for sparse or high-entropy domains. Although many methods address class imbalance or sample scarcity, they implicitly assume homogeneous feature distributions and stable covariance estimation~\cite{goswami2023fecam,goswami2024calibrating,tao2020few}. Under \textit{heterogeneous} domain priors, where domains differ in scale, variance, and orientation, metric calibration becomes biased, and covariance-based regularization becomes unreliable in few-shot regimes~\cite{talur2023few}.
Consequently, existing methods fail to counteract the asymmetric representational forces that define Domain Gravity.
To capture these real-world challenges, we adopt \textbf{Cross-Discipline Variable Few-Shot Class-Incremental Learning (XD-VSCIL)}, where each task originates from a distinct visual discipline with varying entropy and sample availability. Domain Gravity intensifies in this setting, making XD-VSCIL a practical benchmark for evaluating robustness under heterogeneous and imbalanced conditions.

To address these challenges, we propose \textbf{Hybrid Prototype Calibration (\textsc{HyCal})}, a training-free method operating on frozen pretrained embeddings. \textsc{HyCal} combines cosine similarity, capturing global directional structure, with Mahalanobis distance, encoding domain-specific covariance and entropy differences. This hybrid calibration yields stable prototypes under heterogeneous domain shifts without additional parameters or fine-tuning. \textsc{HyCal} mitigates the representational asymmetry induced by Domain Gravity, yielding consistent improvements in the retention–adaptation trade-off across heterogeneous disciplines. Its training-free design—relying solely on stored embeddings and distance computations—enables seamless integration of new visual disciplines without altering model parameters, maintaining both efficiency and interpretability.

\begin{figure*}[t]
    \centering

    \begin{minipage}[c]{0.30\linewidth}
        \centering
        \scriptsize
        \setlength{\tabcolsep}{1.0mm}
        \renewcommand{\arraystretch}{0.95}
        \captionof{table}{Performance comparison under various few-shot settings (Last Acc., \%).}
        \resizebox{\linewidth}{!}{%
        \begin{tabular}{lllll}
        \toprule
        \begin{tabular}[c]{@{}l@{}}\textbf{Domain}\\ (\#Class)\end{tabular} &
        \begin{tabular}[c]{@{}l@{}}\textbf{Med}\\ (11)\end{tabular} &
        \begin{tabular}[c]{@{}l@{}}\textbf{Tex}\\(47)\end{tabular} &
        \begin{tabular}[c]{@{}l@{}}\textbf{Last}\\ \textbf{Acc.}\end{tabular} &
        \begin{tabular}[c]{@{}l@{}}\textbf{Gap}\\ ($\Delta\downarrow$)\end{tabular} \\
        \midrule
        Zero-shot & 22.4 & 44.3 & -- & 21.9 \\
        \midrule 
        \multicolumn{5}{l}{\textbf{General Few-Shot (10-Shot)}} \\
        \textit{\# of Training Samples} & \textit{110} & \textit{470} & - & \textit{370} \\
        \quad {Primal-RAIL~\cite{xu2024advancing}}        & {\textcolor{gray}{54.65}} & {\textcolor{gray}{66.37}} & {\textcolor{gray}{60.51}} & {\textcolor{gray}{11.72}} \\ \hdashline
          \quad RanPAC~\cite{mcdonnell2023ranpac}                & {62.54} & {64.60} & {63.57} & 2.06 \\
          \quad KLDA~\cite{momeni2025continual}              & 3.46 &
                                 0.98 &
                                  1.23 &
                                 2.48 \\
          \quad \textsc{HyCal} (Ours)           & \textbf{65.04} & \textbf{65.48} & \textbf{65.26} & \textbf{0.45} \\
        \midrule
        \multicolumn{5}{l}{\textbf{Balanced (20-shot vs 5-shot)}} \\
        \textit{\# of Training Samples} & \textit{220} & \textit{235} & - & \textit{15} \\
        \quad {Primal-RAIL~\cite{xu2024advancing}}       & {\textcolor{gray}{67.69}} & {\textcolor{gray}{61.64}} & {\textcolor{gray}{64.67}} & {\textcolor{gray}{6.05}} \\  \hdashline
        \quad RanPAC~\cite{mcdonnell2023ranpac}                & {66.51} & {55.02} & {60.77} & 11.49 \\
        \quad KLDA~\cite{momeni2025continual}                   & 4.20 & 0.99 & 0.51& \textbf{3.21\textsuperscript{*}} \\
        \quad \textsc{HyCal} (Ours)           & \textbf{69.38} & \textbf{60.58} & \textbf{64.98} & 8.80 \\
        \midrule
        \multicolumn{5}{l}{\textbf{Imbalanced (5-shot vs 10-shot)}} \\
        \textit{\# of Training Samples} & \textit{55} & \textit{470} & - & \textit{415} \\
        \quad {Primal-RAIL~\cite{xu2024advancing}}            & {\textcolor{gray}{56.28}} & {\textcolor{gray}{65.31}} & {\textcolor{gray}{60.79}} & {\textcolor{gray}{9.03}} \\ \hdashline
        \quad RanPAC~\cite{mcdonnell2023ranpac}                & {5.82} & {62.65} & {59.23} & {6.83} \\
        \quad KLDA~\cite{momeni2025continual}                 & 4.73& 0.37& 1.83 & \textbf{4.36\textsuperscript{*}}\\
        \quad \textsc{HyCal} (Ours)           & \textbf{60.37} & \textbf{65.31} & \textbf{62.84} & 4.94 \\
        \bottomrule
        \end{tabular}
        
        }

        \label{tab:freeze_train_gap}
    \end{minipage}
    \hfill
    \begin{minipage}[c]{0.68\linewidth}
        \centering
        \includegraphics[width=\linewidth, height=0.277\textheight, keepaspectratio]{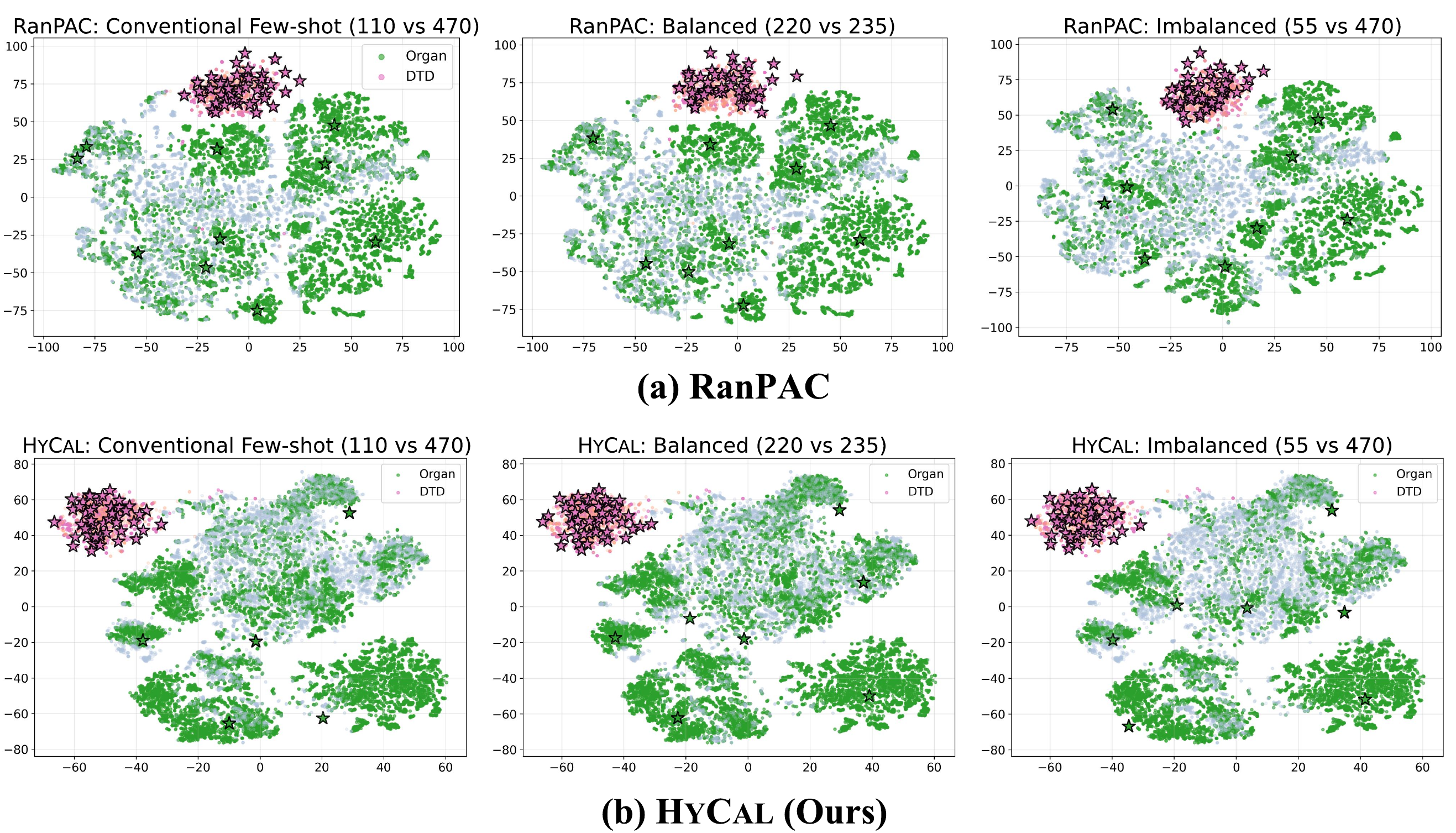}
        \captionof{figure}{t-SNE visualization showing prototype embeddings. 
Prototypes for $D^{med}$ are denoted by \includegraphics[height=0.8em]{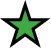}, 
and prototypes for $D^{tex}$ are denoted by \includegraphics[height=0.8em]{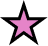}.}
    \label{fig:tsne-hycal_ranpac}
    \end{minipage}
\end{figure*}

In summary, our contributions are:
\begin{itemize}
\item We introduce Domain Gravity, a structural bias that explains representational drift under heterogeneous domain imbalance.
\item We present \textsc{HyCal}, a training-free cosine–Mahalanobis fusion method that mitigates Domain Gravity without modifying the backbone.
\item We define XD-VSCIL, a cross-discipline few-shot continual learning setting reflecting severe domain heterogeneity and imbalance.
\item We show through extensive experiments that \textsc{HyCal} consistently improves retention–adaptation performance and remains robust even under strong Domain Gravity.
\end{itemize}

\section{Related work}
\label{sec:relatedwork}

Research in cross-domain FSCIL increasingly leverages the strong feature representations of pretrained VLMs to support adaptation across heterogeneous visual domain~\cite{xu2024advancing,yu2024boosting,luo2025lada}. However, existing FSCIL and cross-domain adaptation methods struggle under domain heterogeneity. Many techniques focus on class imbalance or sample scarcity but implicitly assume homogeneous feature distributions and stable covariance estimation~\cite{goswami2023fecam,goswami2024calibrating,hong2024dynamically}. When domains differ in scale, variance, or orientation, metric calibration becomes biased and covariance regularization becomes unreliable in few-shot settings, failing to counteract the representational asymmetry characteristic of Domain Gravity.

To reduce domain discrepancy, several approaches introduce backbone-level adaptation modules or learnable prompts to align features across domains~\cite{zhou2022learning,li2023blip}. Other methods attempt to enrich or expand the representation space itself~\cite{feng2025defying,goswami2024calibrating,momeni2025continual}. RanPAC~\cite{feng2025defying} leverages random projections to obtain more expressive prototypes, while kernel-based feature expansions~\cite{goswami2024calibrating} aim to increase discriminative capacity without modifying the backbone. Although these strategies improve alignment to observed domains, they often amplify representational drift toward visually coherent or data-rich domains—precisely the effect captured by Domain Gravity—and generalize poorly to unseen or structurally dissimilar domains~\cite{CLIP,gao2024clip}. Prototype-based methods avoid backbone updates, but their reliance on stable covariance estimation or expanded feature spaces makes them fragile in heterogeneous few-shot regimes where Domain Gravity is strongest~\cite{goswami2023fecam,feng2025defying}. We focus on addressing the asymmetric representational forces induced by heterogeneous domains—a challenge that remains underexplored in training-free, few-shot scenarios.
\section{Cross-discipline variable few-shot class incremental learning (XD-VSCIL)}
\label{sec:xd-vscil}
\subsection{Preliminary analysis of Domain Gravity}

\begin{figure*}[ht]
\centering
\includegraphics[width=\textwidth]{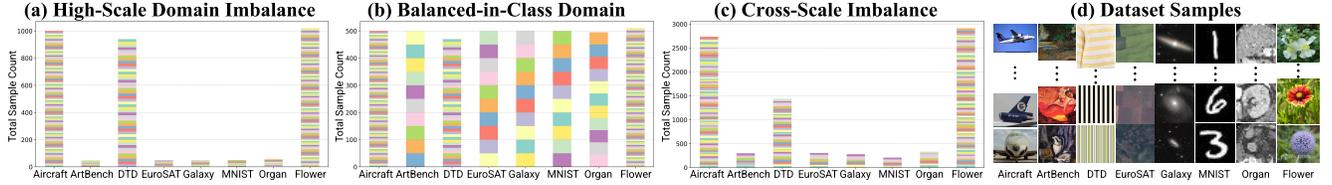}
\caption{
Overview of three representative dataset distributions used in XD-VSCIL.
(a)–(c): training-sample distributions under (a) highly domain-imbalanced setting, (b) balanced setting, and (c) class- and domain-wise imbalanced settings.
(d): example images from eight heterogeneous domains.
}
\label{fig:exp_setting}
\end{figure*}
We conduct a preliminary analysis to investigate how imbalanced domain priors affect the quality of representation in cross-domain FSCIL. We consider two heterogeneous domains:
$D^{med}$: OrganMNIST~\cite{OrganMNIST2}, a medical imaging dataset with 11 classes, and
$D^{tex}$: DTD~\cite{DTD}, a texture dataset with 47 classes.
These domains differ substantially in semantic structure and class cardinality. Experimental results are in \cref{tab:freeze_train_gap}.

In the standard few-shot CIL setting (10 shots per class), $D^{med}$ contains only 110 training samples while $D^{tex}$ contains 470, resulting in approximately 4× imbalance. In the \textit{variable-shot balanced} scenario, we adjust shots to equalize total samples: 20-shots for $D^{med}$ (220 samples) and 5-shots for $D^{tex}$ (235 samples), achieving nearly 1:1 ratio. Under this setting, $D^{med}$ improves by nearly 5\%p, highlighting the adverse effect of imbalanced domain priors. In the highly imbalanced scenario (5-shots for $D^{med}$, 10-shots for $D^{tex}$), yielding 1:9 ratio, performance collapses across multiple methods. Severe imbalance limits generalization, amplifies domain-specific challenges, and disrupts knowledge accumulation, reducing average performance by 2.42\%p.

To analyze representational behavior under imbalance, we visualize the embedding space via t-SNE. As shown in \cref{fig:tsne-hycal_ranpac}, prototype embeddings for RanPAC~\cite{mcdonnell2023ranpac} drift from corresponding domain clusters, especially for underrepresented domains. This reflects domain-dependent feature space deformation, where decision boundaries blur and inter-class entanglement increases. Despite using random projections to enrich representations, RanPAC fails to capture domain distributions. Prototypes remain biased toward dominant feature space regions, indicating persistent gravitational bias under strong Domain Gravity conditions.

These analyses demonstrate that imbalanced domain priors distort embedding geometry and existing methods, including projection-based approaches, struggle to maintain class separability. \textsc{HyCal} mitigates these issues by recalibrating prototypes using both directional and distributional cues, achieving more reliable representation alignment across heterogeneous domains.

\subsection{XD-VSCIL problem setting}

In XD-VSCIL, a model incrementally learns new classes from minimal examples originating from diverse domains, which are distinctly different from previously encountered data, while preserving knowledge of previously learned classes.
In the XD-VSCIL setting, learning progresses over discrete steps $t=1, \ldots, \mathcal{T}$. At each step $t$, the model learns from a new domain $\mathcal{D}^t = \{(x_i^t, y_i^t)\}_{i=1}^{K^t}$ corresponding to a novel academic discipline domain.
The model learns a class set $\mathcal{C}^t$ at each step $t$ under variable few-shot conditions, with limited per-class training samples $K_c^t$ per new class $c$, resulting in a total number of training samples $K^t = \sum_{c \in \mathcal{C}^t} K_c^t$. Novel classes $\mathcal{C}^t$ within domain $\mathcal{D}^t$ are disjoint from all previously encountered classes $\mathcal{C}_{seen}=\bigcup_{n=1}^{t-1}\mathcal{C}^n$.
Reflecting realistic data availability, $K_c$ values are not always fixed. Therefore, this setting naturally embodies an imbalanced data structure, with considerable variability in both the number of new class counts $|\mathcal{C}^t|$ and the number of per-class samples $K_c^t$ across different steps. As shown in \cref{fig:exp_setting} (a)-(c), this flexibility allows XD-VSCIL to be configured for various practical scenarios.
This cross-discipline approach implies substantial distribution shifts, as domain $\mathcal{D}^t$ significantly differs from previously encountered domains $\mathcal{D}^1, \ldots, \mathcal{D}^{t-1}$. 

\subsection{XD-VSCIL benchmark}

To illustrate the challenging cross-disciplinary and data-scarce nature of the XD-VSCIL setting, we utilize a sequence of eight benchmark datasets, as shown in \cref{fig:exp_setting} (d).
These datasets are intentionally selected for their diversity, covering distinct academic disciplines and visual domains: Aircraft (aeronautics)~\cite{Aircraft}, ArtBench (art)~\cite{ArtBench-10}, DTD (textures)~\cite{DTD}, EuroSAT (remote sensing)~\cite{EuroSAT}, Galaxy (astronomy)~\cite{Galaxy10}, MNIST (fundamental visual perception)~\cite{MNIST}, OrganMNIST (medical imaging)~\cite{OrganMNIST2}, and OxfordFlowers (fine-grained biology)~\cite{Flowers102}.
This selection ensures significant domain shifts between sequential tasks, embodying the cross-discipline aspect. 
Each task, representing a new dataset and discipline, adheres to the few-shot principle by providing only a limited number of samples per class, with variability in both class count and per-class sample availability to reflect the inherent data imbalance.

\subsection{XD-VSCIL metric}

In addition to the widely used average accuracy across all classes (i.e., Average accuracy) and Last accuracy~\cite{zhou2024class}, we propose a novel metric, Cross-Discipline Efficiency (CDE) score $\text{S}_\text{CDE}$. It is specifically designed for holistically evaluating zero-shot generalization, plasticity, forgetting mitigation, and data efficiency. 

To prioritize data efficiency, both adaptability component $\text{S}_\text{adapt}$ and Last accuracy component $\text{S}_\text{last}$ utilize weights $w^t \propto 1/\sqrt{K^t}$, where $K^t$ is the training data size for domain $\mathcal{D}^t$.

$\text{S}_\text{adapt}$ captures the model's ability to leverage existing knowledge and acquire new domain knowledge, where $Z^t$ is zero-shot accuracy and $A_t^t$ is the accuracy on domain $D^t$ after step $t$. $\text{S}_\text{last}$ is the weighted average of accuracies across all domains at the final step:
\begin{equation}
    \text{S}_\text{adapt} = \sum_{t=1}^{\mathcal{T}} \left(w^t \cdot \frac{Z^t +A_t^t}{2}\right), \text{S}_\text{last} = \sum_{t=1}^{\mathcal{T}} (w^t \cdot A_t^{\mathcal{T}})
\end{equation}

The proposed $\text{S}_\text{CDE}$ robustly captures model capability under the demanding XD-VSCIL setting. The overall $\text{S}_\text{CDE}$ is computed as:
\begin{equation}
    \text{S}_\text{CDE}=\frac{2 \cdot \text{S}_\text{adapt} \cdot \text{S}_\text{last}}{\text{S}_\text{adapt} + \text{S}_\text{last}},
    \label{eq:cilsds}
\end{equation}

where $\mathcal{S}_{CDE}=0$ if the denominator equals zero.

\section{Method}
\label{sec:method}
\begin{figure*}[ht]
\centering
\includegraphics[width=0.95\textwidth]{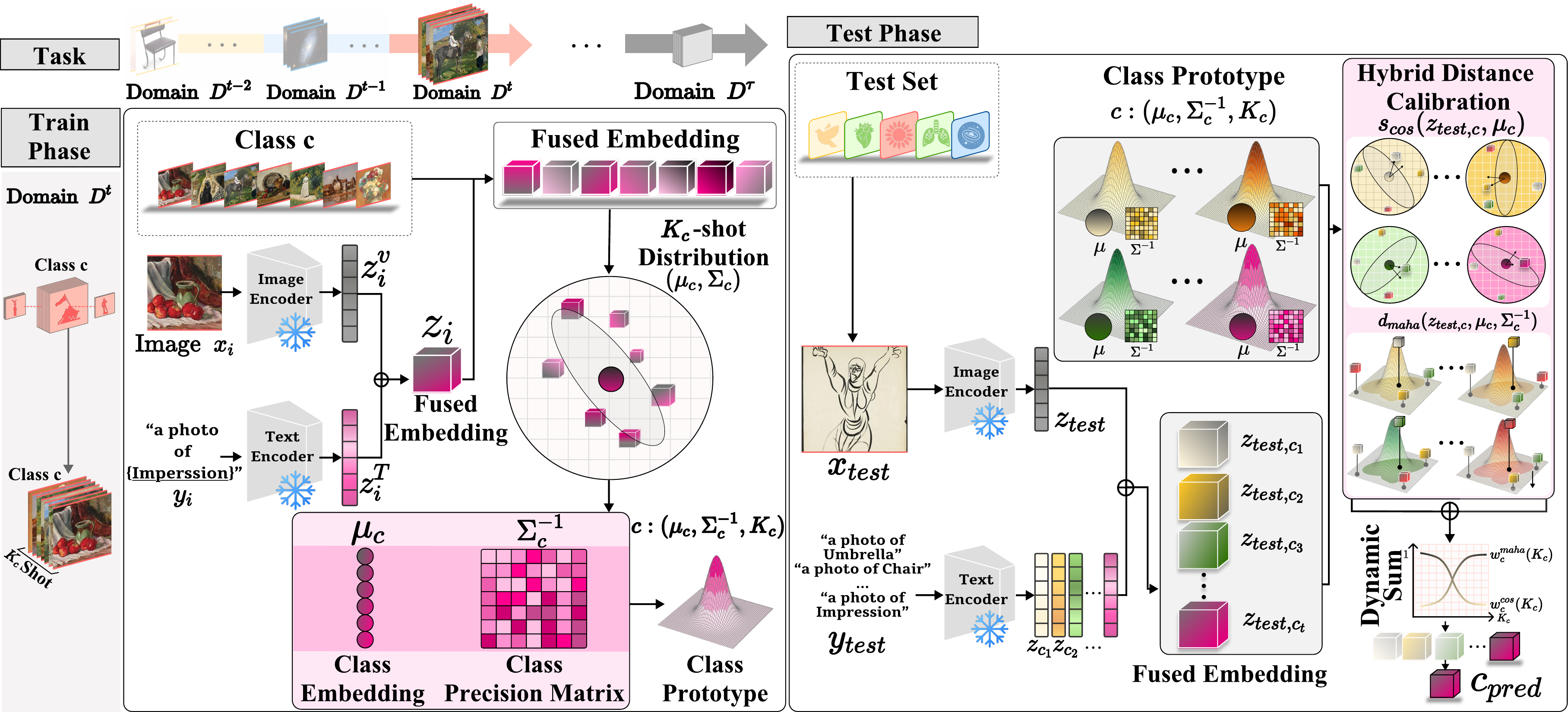}
\caption{
Overview of \textsc{HyCal}, our training-free prototype calibration method. For each XD-VSCIL task, class prototypes are constructed from frozen CLIP embeddings using a few sample images during training. At inference, \textsc{HyCal} combines cosine similarity and Mahalanobis distance to compute test-to-prototype scores, enabling robust classification under domain shift and data imbalance.}
\label{fig:overview}
\end{figure*}

We introduce \textsc{HyCal}, a training-free calibration method that leverages both cosine similarity and Mahalanobis distance to effectively match test embeddings with class prototypes, as illustrated in \cref{fig:overview}. This hybrid-distance formulation captures complementary directional and distributional cues—cosine similarity provides stable global orientation under heterogeneous domain drift, while Mahalanobis distance corrects domain-specific covariance and entropy differences.
\textsc{HyCal} operates through simple, explainable distance computations—requiring no updates, no additional parameters, and no model expansion—while providing stable prototype matching under heterogeneous few-shot conditions.

\subsection{Complementary roles of cosine and Mahalanobis measures}
Cosine similarity and Mahalanobis distance capture different geometric properties in high-dimensional spaces. Cosine similarity measures directional alignment, while Mahalanobis distance accounts for magnitude with covariance weighting. Combining them provides richer information than using either alone, which is the basis of \textsc{HyCal}.

For feature vectors $X, Y \in \mathbb{R}^d$, we define:
\begin{align}
C = \frac{X^\top Y}{\|X\| \|Y\|}, \quad M= \sqrt{(X - Y)^\top \Sigma^{-1} (X - Y)}.
\end{align}

To understand why these measures are complementary, consider the difference $\Delta = X - Y$ following an isotropic Gaussian $\Delta\sim \mathcal{N}(0, \sigma^2 I_d)$. We can decompose $\Delta = RU$, where $R = \|\Delta\| \sim \sigma \cdot \chi_d$ represents magnitude and $U = \Delta / \|\Delta\| \in \mathbb{S}^{d-1}$ represents direction.

Under this distribution, $R$ and $U$ are statistically independent ($R \perp U$). Since cosine similarity depends on $U$ and Mahalanobis distance depends on $R$ (and covariance), they capture independent, complementary information. This independence motivates combining both measures in \textsc{HyCal}.

\begin{table*}[t]
\caption{Performance under High-Scale Domain Imbalanced.
Average and Last accuracy (\%) across 8 domains, comparing parametric and parameter-free methods.
Results are averaged over 4 seeds with 95\% confidence intervals in subscripts. Domain names are abbreviated.}
\label{tab:high_scale}
\centering
\setlength{\tabcolsep}{0.9mm}
\resizebox{0.98\textwidth}{!}{%
\begin{tabular}{l c c c c c c c c c c}
\toprule\midrule\rowcolor{gray!24}
\multicolumn{11}{c}{\textbf{High-Scale Domain Imbalance}}
\\\midrule
& \rotatebox{0}{Aircraft}
& \rotatebox{0}{ArtBench}
& \rotatebox{0}{DTD}
& \rotatebox{0}{EuroSAT}
& \rotatebox{0}{Galaxy}
& \rotatebox{0}{MNIST}
& \rotatebox{0}{Organ}
& \rotatebox{0}{Flower}
& \rotatebox{0}{\textbf{Average}}
& \rotatebox{0}{{$\boldsymbol{\sigma}$}} \\ \toprule

\multicolumn{1}{l}{\quad Zero-shot}
  & 23.91 & 50.88 & 41.90 & 37.58 & 9.80 & 44.01 & 17.97 & 67.40 & 56.41 & 18.76
\\\midrule\hline

\multicolumn{11}{l}{\textbf{Average Acc.}}\\

\quad Primal-RAIL~\cite{xu2024advancing}
  & \textcolor{gray}{$37.33_{\pm 0.17}$} & \textcolor{gray}{$60.39_{\pm 0.74}$} & \textcolor{gray}{$68.79_{\pm 1.20}$ }& \textcolor{gray}{$72.40_{\pm 1.11}$}
  & \textcolor{gray}{$23.11_{\pm 8.42}$} & \textcolor{gray}{$77.77_{\pm 6.00}$} & \textcolor{gray}{$53.50_{\pm 0.75}$} & \textcolor{gray}{$90.30_{\pm 0.67}$}
  & \textcolor{gray}{$53.49_{\pm 0.79}$} & \textcolor{gray}{$22.04_{\pm 1.59}$} \\\hdashline

\quad FeCAM~\cite{goswami2023fecam}
  & $10.75_{\pm 3.36}$ & $0.00_{\pm 0.00}$  & $33.64_{\pm 12.40}$ & $0.00_{\pm 0.00}$
  & $0.00_{\pm 0.00}$  & $0.00_{\pm 0.00}$  & $0.00_{\pm 0.00}$   & $1.38_{\pm 2.61}$
  & $8.91_{\pm 2.75}$  & $11.91_{\pm 4.23}$ \\

\quad RanPAC~\cite{mcdonnell2023ranpac}
  & $\underline{36.99_{\pm 1.19}}$ & $\underline{36.34_{\pm 3.74}}$ & $66.35_{\pm 0.79}$ & $73.02_{\pm 5.11}$
  & $\mathbf{40.97_{\pm 6.77}}$ & $\mathbf{82.33_{\pm 4.10}}$ & $\mathbf{60.66_{\pm 4.46}}$
  & $92.82_{\pm 1.23}$ & $\underline{49.98_{\pm 0.61}}$ & $21.57_{\pm 1.61}$ \\

\quad KLDA~\cite{momeni2025continual}
  & $29.19_{\pm 0.47}$ & $31.66_{\pm 2.65}$ & $\underline{68.07_{\pm 1.66}}$ & $\mathbf{75.21_{\pm 4.56}}$
  & $36.85_{\pm 3.87}$ & $\underline{79.95_{\pm 3.97}}$ & $59.52_{\pm 3.92}$ & \textit{$95.19_{\pm 1.46}$}
  & $41.06_{\pm 0.80}$ & $24.61_{\pm 0.91}$ \\

\quad \textsc{HyCal}(Ours)
  & $\mathbf{40.97_{\pm 0.69}}$ & $\mathbf{49.99_{\pm 2.27}}$ & $\mathbf{69.84_{\pm 1.07}}$
  & $\underline{74.70_{\pm 1.71}}$ & $\underline{39.65_{\pm 2.62}}$ & $77.56_{\pm 6.05}$
  & $\underline{60.47_{\pm 4.63}}$ & $\mathbf{95.34_{\pm 0.78}}$
  & $\mathbf{54.48_{\pm 0.78}}$ & $19.50_{\pm 0.76}$ \\\midrule\hline

\multicolumn{11}{l}{\textbf{Last Acc.}}\\

\quad Primal-RAIL~\cite{xu2024advancing}
  & \textcolor{gray}{$36.91_{\pm 0.61}$ }& \textcolor{gray}{$60.44_{\pm 0.90}$} & \textcolor{gray}{$67.75_{\pm 0.68}$ }& \textcolor{gray}{$71.66_{\pm 1.11}$}
  & \textcolor{gray}{$20.90_{\pm 7.20}$} & \textcolor{gray}{$77.73_{\pm 6.05}$ }& \textcolor{gray}{$53.23_{\pm 0.61}$} & \textcolor{gray}{$90.30_{\pm 0.67}$}
  & \textcolor{gray}{$59.86_{\pm 1.51}$ }& \textcolor{gray}{$22.53_{\pm 1.36}$ }\\\hdashline

\quad FeCAM~\cite{goswami2023fecam}
  & $10.56_{\pm 3.43}$ & $0.00_{\pm 0.00}$ & $33.64_{\pm 12.40}$ & $0.00_{\pm 0.00}$
  & $0.00_{\pm 0.00}$  & $0.00_{\pm 0.00}$ & $0.00_{\pm 0.00}$   & $1.38_{\pm 2.61}$
  & $5.69_{\pm 1.90}$  & $11.90_{\pm 4.24}$ \\

\quad RanPAC~\cite{mcdonnell2023ranpac}
  & $37.20_{\pm 1.33}$ & $36.86_{\pm 2.51}$ & $66.71_{\pm 0.65}$ & $72.89_{\pm 4.74}$
  & $\mathbf{41.04_{\pm 6.97}}$ & $\mathbf{81.25_{\pm 4.61}}$
  & $\underline{60.25_{\pm 4.51}}$ & $92.82_{\pm 1.23}$
  & $61.13_{\pm 1.29}$ & $21.29_{\pm 1.67}$ \\

\quad KLDA~\cite{momeni2025continual}
  & $\underline{39.38_{\pm 0.57}}$ & $\underline{37.67_{\pm 2.96}}$ & $\underline{68.99_{\pm 1.64}}$
  & $\mathbf{75.18_{\pm 4.39}}$ & $37.11_{\pm 3.07}$ & $\underline{78.77_{\pm 4.20}}$
  & $59.17_{\pm 4.10}$ & $\underline{95.19_{\pm 1.46}}$
  & $\underline{61.43_{\pm 0.94}}$ & $21.89_{\pm 0.78}$ \\

\quad \textsc{HyCal}(Ours)
  & $\mathbf{40.98_{\pm 0.61}}$ & $\mathbf{49.68_{\pm 2.23}}$ & $\mathbf{69.59_{\pm 0.93}}$
  & $\underline{74.63_{\pm 1.78}}$ & $\underline{39.54_{\pm 2.62}}$ & $77.71_{\pm 5.24}$
  & $\mathbf{60.51_{\pm 4.73}}$ & $\mathbf{95.34_{\pm 0.78}}$
  & $\mathbf{63.50_{\pm 0.73}}$ & $19.55_{\pm 0.80}$ \\
\bottomrule

\end{tabular}
}
\end{table*}

\begin{figure*}[ht]
\centering
\includegraphics[width=0.98\textwidth]{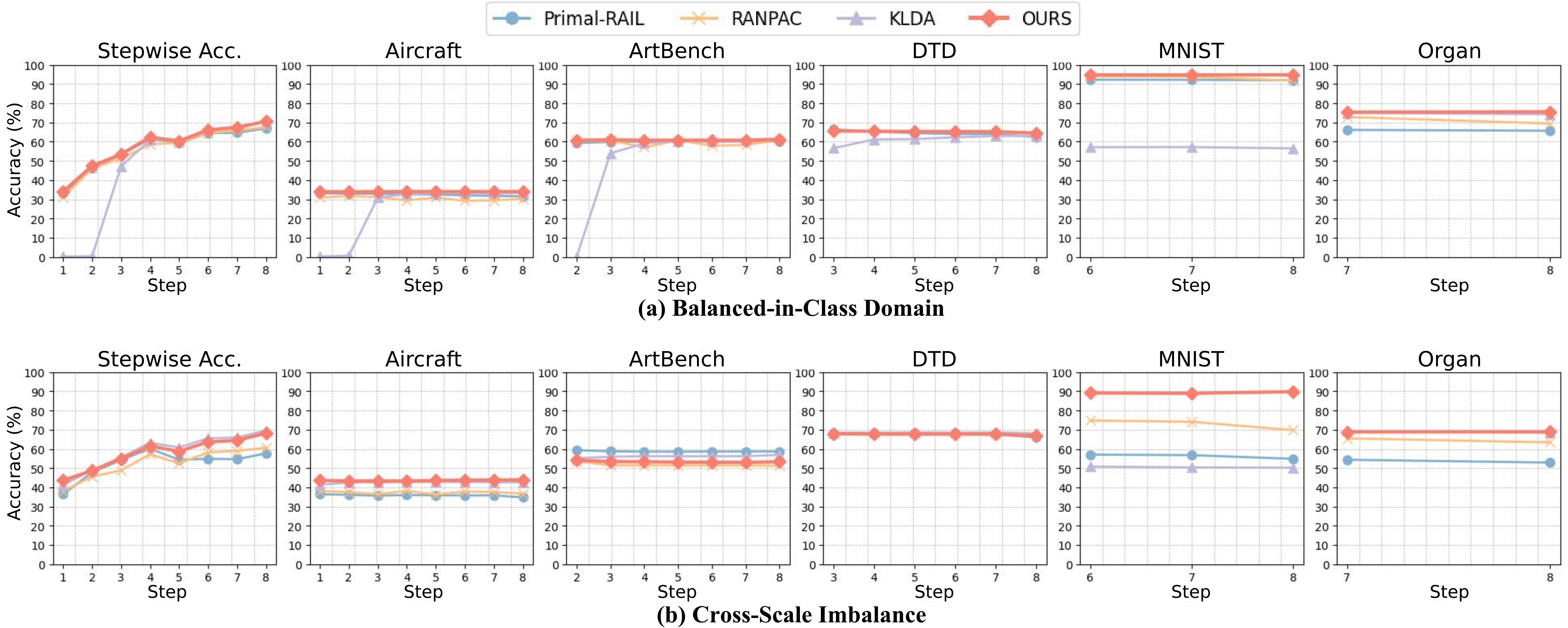} 
\caption{
Task-per accuracy across incremental steps under the (a) Balanced-in-class domain and (b) Cross-scale imbalance settings.
}
\label{fig:bal_cross}
\end{figure*}
\begin{theorem}[Independence of cosine similarity and Mahalanobis distance]
\label{thm:independence}
\textit{Cosine similarity depends only on direction, $C = f(U)$, and Mahalanobis distance only on norm, $M = g(R)$. Since $R \perp U$, it follows that $C \perp M$, and thus:}
\begin{align}
    H(C, M) = H(C) + H(M)
\end{align}
\end{theorem}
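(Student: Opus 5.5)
The plan is to treat the statement as a chain of three facts: (i) $R \perp U$ under the isotropic model $\Delta \sim \mathcal{N}(0,\sigma^2 I_d)$; (ii) independence is preserved under measurable maps, so $C=f(U)$ and $M=g(R)$ are independent; (iii) for independent random variables the joint entropy is the sum of the marginals. I will take the modelling assumptions of the preceding subsection as given: $C$ is a measurable function of the direction $U$ alone, and $M$ is a measurable function of the magnitude $R$ alone.

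For (i), I would write the density of $\Delta$ in polar coordinates $\Delta = rU$. The Gaussian density is $(2\pi\sigma^2)^{-d/2}\exp(-\|\delta\|^2/2\sigma^2)$, and the volume element is $d\delta = r^{d-1}\,dr\,d\omega(u)$, with $\omega$ the surface measure on $\mathbb{S}^{d-1}$. The joint density of $(R,U)$ is therefore
\begin{align}
p(r,u) = \Bigl[c_d\, r^{d-1} e^{-r^2/2\sigma^2}\Bigr]\cdot \frac{1}{|\mathbb{S}^{d-1}|},
\end{align}
a product of a function of $r$ (the $\sigma\chi_d$ law) and the uniform density on the sphere. This factorisation gives $R \perp U$. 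A less computational alternative is rotation invariance: for any orthogonal $Q$, $(R, QU) \overset{d}{=} (R,U)$, so the conditional law of $U$ given $R$ is rotation-invariant and hence uniform, whatever $R$ is.

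For (ii), for Borel sets $A,B$ I would use
\begin{align}
P(C\in A, M\in B) = P(U\in f^{-1}(A))\,P(R\in g^{-1}(B)) = P(C\in A)\,P(M\in B).
\end{align}
For (iii), I would state the result as $I(C;M)=0$ and use $H(C,M)=H(C)+H(M)-I(C;M)$. Equivalently, when densities exist, $p_{C,M}=p_Cp_M$, so
\begin{align}
H(C,M)=-\mathbb{E}\log p_C(C)-\mathbb{E}\log p_M(M).
\end{align}
This needs the (differential) entropies to be finite, which I would add as a standing assumption. With $\Sigma=\sigma^2 I$ we have $M = R/\sigma$, a scaled $\chi_d$ variable with finite differential entropy.

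The main obstacle is justifying the hypotheses $C=f(U)$ and $M=g(R)$, not the probabilistic steps. Mahalanobis distance depends only on $R$ only when $\Sigma \propto I$. For general $\Sigma$, $M = R\sqrt{U^\top\Sigma^{-1}U}$ mixes in the direction. Similarly, cosine similarity between $X$ and $Y$ is not literally a function of the direction of $X-Y$ alone. I would therefore state the theorem explicitly under the isotropic-covariance idealisation, with $\Sigma=\sigma^2 I$ and $C$ modelled as a function of $U$. Under that idealisation the three steps above are rigorous. I would then remark that for anisotropic $\Sigma$ one only gets approximate independence, to be checked empirically.
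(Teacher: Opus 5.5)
Your proposal is correct and matches the paper's proof essentially step for step. Like the paper, you factorize the Gaussian density in hyperspherical coordinates to get $R \perp U$, pass independence through the measurable maps $f$ and $g$ (which the paper also takes as surrogate-model assumptions, with $\Sigma=\sigma^2 I_d$), and split the joint entropy using $p_{C,M}=p_C\,p_M$. Your additions are sound refinements rather than a different route: the rotation-invariance argument extends $R\perp U$ to any spherically symmetric law with no atom at $0$, and you state explicitly the finite-entropy assumption that the paper's integral manipulation uses implicitly.
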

\begin{proof} 
Please see Supp.~\cref{sec:complementary_role}.
\end{proof}

This shows that the two measures provide non-overlapping information under isotropic conditions. Note that isotropy is used only for analytical clarity—not as a requirement for complementarity. While the isotropic Gaussian setting offers clean analysis~\cite{mardia2024multivariate,cai2013distributions}, the complementarity extends beyond this idealized case.

\subsection{Mutual information interpretation}
Let $L \in \{0, 1, \dots, n\}$ denote a class label. Define the mutual information quantities:
\begin{align}
    I(L; C), \quad I(L; M), \quad I(L; C, M)
\end{align}

\begin{theorem}[Information Gain from Combination]
\label{thm:mutual-information}
\textit{The mutual information between label $L$ and the combined similarity vector $(C, M)$ satisfies:}
\begin{align}
    I(L; C, M) &= I(L; C) + I(L; M \mid C) \\
    &\ge \max\{ I(L; C), I(L; M) \}
\end{align}
\end{theorem}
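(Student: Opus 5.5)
The plan is to derive both parts of the statement from the chain rule for mutual information and the nonnegativity of conditional mutual information. Theorem~\ref{thm:independence} is not needed, since the inequality holds for any jointly distributed $(L,C,M)$. I will treat $C$ and $M$ as random variables induced by a random test embedding and a prototype. Both are measurable functions of the embeddings. Entropies can be read as differential entropies, or the variables can be discretized; the argument is the same in either case, because it uses only identities and inequalities that hold for general random variables. I assume throughout that the relevant mutual informations are finite.

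\textbf{Step 1 (equality).} Expand the joint mutual information with the chain rule:
\begin{align}
I(L;C,M) = H(L) - H(L\mid C,M) = \bigl[H(L)-H(L\mid C)\bigr] + \bigl[H(L\mid C)-H(L\mid C,M)\bigr].
\end{align}
The first bracket is $I(L;C)$ and the second is $I(L;M\mid C)$, which gives the first line. Since $L$ is discrete with $n+1$ values, $H(L)$ and $H(L\mid\cdot)$ are finite and nonnegative. This avoids any subtlety from the continuous nature of $C$ and $M$.

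\textbf{Step 2 (inequality).} Write $I(L;M\mid C)=\mathbb{E}_{C}\bigl[D_{\mathrm{KL}}(P_{L,M\mid C}\,\|\,P_{L\mid C}P_{M\mid C})\bigr]$, which is $\ge 0$ by Gibbs' inequality. Hence $I(L;C,M)\ge I(L;C)$. By symmetry, the chain rule applied in the other order gives $I(L;C,M)=I(L;M)+I(L;C\mid M)\ge I(L;M)$. Taking the maximum of these two lower bounds yields the claim. An equivalent route for this step is the data-processing inequality applied to the coordinate projections $(C,M)\mapsto C$ and $(C,M)\mapsto M$.

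\textbf{Main obstacle.} The only delicate point is rigor when $C$ and $M$ are continuous. The conditional-entropy decomposition then needs $H(L\mid C,M)$ to be defined through regular conditional distributions. I will handle this by stating everything in terms of $H(L\mid\cdot)$, which stays well defined because $L$ is discrete, or through the KL form of mutual information. If desired, I will also add a remark connecting the result to Theorem~\ref{thm:independence}. Under isotropy, $C\perp M$ makes $I(L;M\mid C)$ generally nonzero whenever $M$ carries label information that is not a function of $C$. This shows when the inequality is strict.
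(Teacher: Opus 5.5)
Your proposal is correct and follows essentially the same route as the paper: the chain rule $I(L;C,M)=I(L;C)+I(L;M\mid C)$, nonnegativity of $I(L;M\mid C)$ via its expected-KL form, and the symmetric expansion giving the bound by $I(L;M)$. Your extra care about continuous $C$ and $M$ is fine, and your finiteness assumption holds automatically, since $I(L;\cdot)\le H(L)\le\log(n+1)$ for the discrete label.
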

\begin{proof} 
Please see Supp.~\cref{sec:mutual}.
\end{proof}

This shows that combining cosine similarity and Mahalanobis distance yields strictly more discriminative information than either measure alone.

In practice, pretrained VLM embeddings are highly anisotropic, with varying feature norms and domain-dependent covariance structures. Under domain shifts, directional cues (captured by cosine) and covariance-aware cues (captured by Mahalanobis) diverge further~\cite{papadimitriou2025interpreting, qi2025beyond}, reinforcing their complementarity beyond the isotropic case. Please see Supp.~\cref{sec:strong_complementarity,sec:connection} for further discussion and diagnostics.

Therefore, \textsc{HyCal} exploits this complementarity to extract richer geometric information from frozen embeddings, enabling robust prototype matching without fine-tuning—even under strong domain heterogeneity.

\subsection{Incremental prototype learning}
We use a pre-trained CLIP model with frozen image encoder $f_V(\cdot)$ and text encoder $f_T(\cdot)$ throughout all incremental learning stages. This preserves CLIP's rich general-purpose features for cross-domain generalization while enabling efficient updates without costly backbone retraining.

When a new domain $\mathcal{D}^t$ arrives with novel classes $\mathcal{C}^t$ and only $K_c^t$ samples per class $c \in \mathcal{C}^t$, we incrementally learn and store a prototype for each new class. This strictly follows the CIL paradigm by restricting access to data from previous domains $\mathcal{D}^n$ where $n < t$.

At step $t$, given domain $\mathcal{D}^t$ with classes $\mathcal{C}^t = \{c_j\}_{j=1}^{L^t}$ where $L^t = |\mathcal{C}^t|$, we construct textual input $y_j$ using the template ``a photo of a $c_j$'' for each class name $c_j$.

For image $x_i$, we obtain visual embedding $z_{i}^{V} = f_V(x_i)$, and for text $y_i$, we obtain $z_{i}^{T} = f_T(y_i)$. Given $K_c^t$ training images per class $c$, we generate a fused embedding $z_i$ for each pair $(x_i, y_i)$ by:
\begin{equation}
z_i = [z_{i}^{V}; z_{i}^T],
\end{equation}
where $[;]$ denotes sum-based fusion. All prototype learning occurs in this unified space.

For each class $c \in \mathcal{C}^t$ with $K_c^t$ training samples $\{(x_i^c, y_i^c)\}_{i=1}^{K_c^t}$, we compute fused embeddings $\{z_i^c\}_{i=1}^{K_c^t}$. The class prototype $\mathcal{P}_c$ consists of mean embedding ${\mu}_c$, regularized precision matrix ${\Sigma}_c^{-1}$, and sample size $K_c^t$.

The mean embedding is:
\begin{equation}
    {\mu}_c = \frac{1}{K_{c}^t} \sum_{i=1}^{K_c^t} {z}_i^c
    \label{eq:mean_embedding}
\end{equation}

For the precision matrix, we first estimate empirical covariance ${\Sigma}_c$ and apply regularization for robustness in few-shot settings:
\begin{equation}
    \Sigma_c^{reg} = (1-\lambda)\mathbf{\Sigma}_c + \lambda \gamma \mathbf{I},
    \label{eq:reg_covariance}
\end{equation}
where $\lambda \in [0,1]$ is the regularization parameter, $\mathbf{I}$ is the identity matrix, and $\gamma$ is a small constant based on global feature variance. The precision matrix is ${\Sigma}_c^{-1} = ({\Sigma}_c^{reg})^{-1}$.
The set of learned prototypes up to step $t$ is $\mathcal{P}_{seen}^t = \{\mathcal{P}_c \mid c \in \mathcal{C}_{seen}^t\}$.

\subsection{Inference with hybrid distance prototype calibration}
\label{ssec:classification}

During inference after step $t$, given a test image $x_{test}$ and all previously seen classes $\mathcal{C}_{seen}^t$, we perform classification using \textsc{HyCal}. For each candidate class $c \in \mathcal{C}_{seen}^t$, we construct a class-conditional query embedding ${z}_{test,c}$ by fusing the test image embedding $f_V(x_{test})$ with the text embedding $f_T(y_c)$ for class $c$.

We compute two complementary metrics between ${z}_{test,c}$ and prototype $\mathcal{P}_c = \{{\mu}_c, {\Sigma}_c^{-1}, K_c^t\}$:

\noindent\textbf{Mahalanobis distance ($d_{maha}$).} This captures intra-class variability using inverse class covariance~\cite{ren2021simple}:
\begin{equation}
    d_{maha}(z_{test,c}, {\mu}_c, \Sigma_c^{-1}) = ({z}_{test,c} - {\mu}_c)^T \Sigma_c^{-1} ({z}_{test,c} - {\mu}_c)
    \label{eq:mahalanobis}
\end{equation}
For fusion, $d_{maha}$ is normalized to $[0, 1]$.

\noindent\textbf{Cosine similarity ($s_{cos}$).} This measures angular similarity to prototype mean ${\mu}_c$:
\begin{equation}
    s_{cos}({z}_{test,c}, {\mu}_c) = \frac{{z}_{test,c} \cdot {\mu}_c}{\|{z}_{test,c}\| \|{\mu}_c\|}
    \label{eq:cosine_similarity}
\end{equation}

\noindent\textbf{Dynamic summation.} The final prediction $c_{pred}$ for $x_{test}$ is obtained by adaptively combining $d_{maha}$ and $s_{cos}$ with weight $w_c$:
\begin{align}
    c_{pred} &= \arg\max_{c \in \mathcal{C}_{seen}^t}\left[w_c \cdot d_{maha} + (1-w_c) \cdot s_{cos}\right]\nonumber\\
    &(where, w_c = \frac{1}{1+\exp\big(-(K_c^{t}-\alpha)/\beta\big)})
    \label{eq:final_classification}
\end{align}
$\alpha$ is a threshold and $\beta$ is a scaling hyperparameter. This decision rule leverages both directional and distributional information for robust classification in incremental learning scenarios.

\section{Experiments}

\noindent \textbf{Experimental setting.} All experiments use frozen CLIP ViT-B/16~\cite{CLIP}, and compared methods training-free approaches FeCAM~\cite{goswami2023fecam}, RanPAC~\cite{mcdonnell2023ranpac} and KLDA~\cite{momeni2025continual}, as well as the trainable baseline Primal-RAIL~\cite{xu2024advancing}.
Each result is averaged over 4 runs with random seeds 0, 1, 42, and 1993 to ensure robustness. Further implementation details and the hyperparameter are provided in Supp.~\cref{sec:experimental_setting}.

\begin{table}[t!]
\caption{Performance on $\text{S}_\text{CDE}$ across XD-VSCIL settings.}
\label{tab:CDE}
\centering
\setlength{\tabcolsep}{0.4mm}
\scriptsize
\resizebox{0.8\columnwidth}{!}{%
\begin{tabular}{lccc}
\toprule
Method   & $\boldsymbol{\text{S}_{\text{adapt}}}$  & $\boldsymbol{\text{S}_{\text{last}}}$ & $\boldsymbol{\text{S}_{\text{CDE}}}$ \\
\hline\midrule\rowcolor{gray!8}
\multicolumn{4}{c}{\textbf{Balanced-in-Class Domain}} \\ 
\midrule
Primal-RAIL~\cite{xu2024advancing}
  & \textcolor{gray}{$52.77_{\pm 0.34}$}  & \textcolor{gray}{$67.39_{\pm 0.68}$}  & \textcolor{gray}{$59.19_{\pm 0.47}$} \\\hdashline
FeCAM~\cite{goswami2023fecam}
  & $10.80_{\pm 14.71}$ & $11.55_{\pm 1.95}$  & $10.06_{\pm 6.65}$ \\
RanPAC~\cite{mcdonnell2023ranpac}
  & $\mathbf{56.42_{\pm 11.30}}$ & $63.97_{\pm 12.04}$ & $\underline{59.29_{\pm 0.36}}$ \\
KLDA~\cite{momeni2025continual}
  & $47.47_{\pm 0.25}$  & $\mathbf{70.61_{\pm 0.90}}$ & $56.77_{\pm 0.47}$ \\
\textbf{\textsc{HyCal} (Ours)}
  & $\underline{53.63_{\pm 0.14}}$ & $\underline{70.49_{\pm 0.23}}$ & $\mathbf{60.92_{\pm 0.17}}$ \\
\midrule \midrule
\rowcolor{gray!16}
\multicolumn{4}{c}{\textbf{Cross-Scale Imbalance}} \\ 
\midrule
Primal-RAIL~\cite{xu2024advancing}
  & \textcolor{gray}{$46.95_{\pm 0.83}$}  & \textcolor{gray}{$57.27_{\pm 1.67}$}  & \textcolor{gray}{$51.60_{\pm 1.18}$} \\\hdashline
FeCAM~\cite{goswami2023fecam}
  & $10.67_{\pm 3.58}$   & $20.05_{\pm 7.06}$  & $13.91_{\pm 4.68}$ \\
RanPAC~\cite{mcdonnell2023ranpac}
  & $49.32_{\pm 2.56}$   & $62.08_{\pm 5.98}$  & $54.96_{\pm 3.92}$ \\
KLDA~\cite{momeni2025continual}
  & $\mathbf{52.25_{\pm 0.48}}$ & $\mathbf{69.39_{\pm 1.15}}$ & $\mathbf{59.61_{\pm 0.72}}$ \\
\textbf{\textsc{HyCal} (Ours)}
  & $\underline{51.99_{\pm 0.35}}$ & $\underline{68.83_{\pm 0.57}}$ & $\underline{59.24_{\pm 0.41}}$ \\
\midrule \midrule
\rowcolor{gray!24}
\multicolumn{4}{c}{\textbf{High-Scale Domain Imbalance}} \\ 
\midrule
Primal-RAIL~\cite{xu2024advancing}
  & \textcolor{gray}{$49.41_{\pm 10.41}$} & \textcolor{gray}{$57.86_{\pm 2.22}$} & \textcolor{gray}{$53.18_{\pm 6.51}$} \\\hdashline
FeCAM~\cite{goswami2023fecam}
  & $5.15_{\pm 13.52}$   & $1.85_{\pm 0.62}$  & $1.84_{\pm 2.08}$ \\
RanPAC~\cite{mcdonnell2023ranpac}
  & $\mathbf{50.07_{\pm 8.59}}$ & $56.44_{\pm 10.35}$ & $\underline{52.57_{\pm 1.16}}$ \\
KLDA~\cite{momeni2025continual}
  & $42.39_{\pm 0.64}$   & $\underline{58.79_{\pm 1.47}}$ & $49.26_{\pm 0.95}$ \\
\textbf{\textsc{HyCal} (Ours)}
  & $\underline{47.71_{\pm 0.53}}$ & $\mathbf{61.39_{\pm 1.05}}$ & $\mathbf{53.70_{\pm 0.74}}$ \\
\bottomrule
\end{tabular}
}
\end{table}

\begin{table*}[ht]
\caption{Performance across 5, 10, 15, 20-shots on FSCIL benchmarks, averaged over 4 seeds with 95\% confidence intervals.}
\label{tab:multishot}
\centering
\normalsize
\setlength{\tabcolsep}{1pt}
\renewcommand{\arraystretch}{1.02}
\resizebox{\textwidth}{!}{%

\begin{tabular}{l ccc ccc ccc ccc}
\toprule
            & \multicolumn{3}{c}{\textbf{5-shot}}
            & \multicolumn{3}{c}{\textbf{10-shot}}
            & \multicolumn{3}{c}{\textbf{15-shot}}
            & \multicolumn{3}{c}{\textbf{20-shot}} \\ 
            & Last Acc. & Avg Acc. & $\boldsymbol{\text{S}_{\text{CDE}}}$
            & Last Acc. & Avg Acc. & $\boldsymbol{\text{S}_{\text{CDE}}}$
            & Last Acc. & Avg Acc. & $\boldsymbol{\text{S}_{\text{CDE}}}$
            & Last Acc. & Avg Acc. & $\boldsymbol{\text{S}_{\text{CDE}}}$ \\
\midrule

Primal-RAIL~\cite{xu2024advancing}
 & \textcolor{gray}{$59.40_{\pm1.88}$} & \textcolor{gray}{$51.61_{\pm1.10}$} & \textcolor{gray}{$52.58_{\pm1.73}$}
 & \textcolor{gray}{$59.40_{\pm1.88}$} & \textcolor{gray}{$51.61_{\pm1.10}$} & \textcolor{gray}{$52.58_{\pm1.73}$}
 & \textcolor{gray}{$59.40_{\pm1.88}$} & \textcolor{gray}{$51.61_{\pm1.10}$} & \textcolor{gray}{$52.58_{\pm1.73}$}
 & \textcolor{gray}{$59.40_{\pm1.88}$} & \textcolor{gray}{$51.61_{\pm1.10}$} & \textcolor{gray}{$52.58_{\pm1.73}$} \\\hdashline

FeCAM~\cite{goswami2023fecam}
 & $8.90_{\pm1.12}$ & $8.48_{\pm0.28}$ & $4.27_{\pm6.13}$
 & $11.16_{\pm3.26}$ & $14.62_{\pm4.23}$ & $7.22_{\pm1.76}$
 & $22.48_{\pm6.51}$ & $25.29_{\pm2.84}$ & $21.57_{\pm8.40}$
 & $21.37_{\pm3.18}$ & $25.06_{\pm2.13}$ & $18.31_{\pm10.96}$ \\

RanPAC~\cite{mcdonnell2023ranpac}
 & $\underline{58.04_{\pm1.53}}$ & $\underline{45.48_{\pm2.89}}$ & $\underline{51.94_{\pm1.10}}$
 & $64.34_{\pm0.26}$ & $\underline{53.13_{\pm0.78}}$ & $\underline{56.19_{\pm0.11}}$
 & $66.52_{\pm0.41}$ & $56.62_{\pm0.40}$ & $57.86_{\pm0.30}$
 & $67.85_{\pm0.24}$ & $58.29_{\pm0.40}$ & $\underline{58.70_{\pm0.17}}$ \\

KLDA~\cite{momeni2025continual}
 & $56.54_{\pm0.61}$ & $7.46_{\pm0.08}$ & $29.13_{\pm0.06}$
 & $\underline{66.01_{\pm0.46}}$ & $43.62_{\pm0.44}$ & $53.34_{\pm0.34}$
 & $\underline{68.55_{\pm0.51}}$ & $\underline{57.19_{\pm0.63}}$ & $\underline{58.78_{\pm0.39}}$
 & $\underline{69.73_{\pm0.88}}$ & $\underline{59.65_{\pm0.61}}$ & $52.13_{\pm0.68}$ \\

\textbf{\textsc{HyCal} (Ours)}
 & $\mathbf{60.82_{\pm0.97}}$ & $\mathbf{50.58_{\pm0.74}}$ & $\mathbf{60.36_{\pm1.35}}$
 & $\mathbf{66.52_{\pm0.28}}$ & $\mathbf{56.69_{\pm0.47}}$ & $\mathbf{57.27_{\pm0.21}}$
 & $\mathbf{69.02_{\pm0.37}}$ & $\mathbf{59.27_{\pm0.62}}$ & $\mathbf{59.06_{\pm0.22}}$
 & $\mathbf{70.67_{\pm0.21}}$ & $\mathbf{60.90_{\pm0.56}}$ & $\mathbf{60.21_{\pm0.06}}$ \\
\bottomrule
\end{tabular}
}
\end{table*}

\begin{table}[ht]
\centering
\caption{Performance of individual distance metrics and their combinations across varying imbalance settings.}
\setlength{\tabcolsep}{0.4mm}
\scriptsize
\resizebox{0.8\columnwidth}{!}{%
\begin{tabular}{lccc}
\toprule
Method   & Last Acc. & Avg Acc. & $\boldsymbol{\text{S}_{\text{CDE}}}$ \\
\midrule
\rowcolor{gray!8}
\multicolumn{4}{c}{\textbf{Balanced-in-Class Domain}} \\ 
\midrule
Cosine similarity        & $64.67_{\pm0.82}$ & $54.00_{\pm0.82}$ & $56.82_{\pm0.55}$ \\
Mahalanobis Distance     & $70.34_{\pm0.28}$ & $57.42_{\pm0.27}$ & $60.78_{\pm0.19}$ \\
Average                  & $69.80_{\pm0.41}$ & $57.48_{\pm0.64}$ & $60.32_{\pm0.17}$ \\
\textbf{Dynamic (Ours)} & $\mathbf{70.54_{\pm0.23}}$ & $\mathbf{57.75_{\pm0.45}}$ & $\mathbf{60.92_{\pm0.17}}$ \\
\midrule
\rowcolor{gray!16}
\multicolumn{4}{c}{\textbf{Cross-Scale Imbalance}} \\
\midrule
Cosine similarity        & $65.09_{\pm0.64}$ & $55.85_{\pm0.62}$ & $56.44_{\pm0.59}$ \\
Mahalanobis Distance     & $65.85_{\pm1.47}$ & $55.68_{\pm1.46}$ & $56.99_{\pm1.41}$ \\
Average                  & $67.63_{\pm1.39}$ & $57.97_{\pm1.36}$ & $58.24_{\pm1.16}$ \\
\textbf{Dynamic (Ours)} & $\mathbf{68.69_{\pm0.44}}$ & $\mathbf{58.43_{\pm0.51}}$ & $\mathbf{59.24_{\pm0.41}}$ \\
\midrule
\rowcolor{gray!24}
\multicolumn{4}{c}{\textbf{High-Scale Domain Imbalance}} \\
\midrule
Cosine similarity        & $60.09_{\pm1.65}$ & $51.46_{\pm1.44}$ & $51.29_{\pm1.50}$ \\
Mahalanobis Distance     & $63.10_{\pm0.85}$ & $53.68_{\pm0.71}$ & $53.50_{\pm0.87}$ \\
Average                  & $63.33_{\pm0.81}$ & $54.35_{\pm0.95}$ & $53.68_{\pm0.76}$ \\
\textbf{Dynamic (Ours)} & $\mathbf{63.50_{\pm0.73}}$ & $\mathbf{54.48_{\pm0.78}}$ & $\mathbf{53.70_{\pm0.74}}$ \\
\bottomrule
\end{tabular}
}
\label{tab:dismetric}
\end{table}

\subsection{XD-VSCIL}

We evaluate models under three XD-VSCIL settings that capture imbalance conditions of varying severity. Our primary focus is the \textit{High-scale domain imbalance} setting, which reflects a realistic and challenging scenario where domains differ substantially in total sample size while maintaining uniform shots per class, thus inducing pronounced domain-level skew that exposes vulnerabilities in existing methods. Additionally, we examine the \textit{Balanced-in-class domain} setting as a controlled baseline where domains contribute comparable data and classes receive identical shots, and the \textit{Cross-scale imbalance} setting where each class receives 5–50 samples, representing variation in both class-wise and domain-wise sample scales that bridges the balanced baseline and high-scale imbalance scenarios. The learning order follows alphabetical order: Aircraft, ArtBench, DTD, EuroSAT, Galaxy, MNIST, OrganMNIST, and OxfordFlowers.

\cref{tab:high_scale} presents the results under the High-scale domain imbalance setting. 
Our proposed \textsc{HyCal} demonstrates superior performance in highly challenging scenario, achieving substantial improvements of over 4.5\%p in Average accuracy compared to second-best training-free method. 
\cref{fig:bal_cross} illustrates the task-per accuracy across incremental steps for the Balanced-in-class domain and Cross-scale imbalance settings. As shown in the first column, existing methods exhibit slower accuracy growth under the imbalanced setting compared to the balanced counterpart when domain data scales are imbalanced. Moreover, particularly evident in the MNIST results, prior methods suffer from degraded accuracy in the imbalanced setting relative to the balanced baseline. In contrast, our proposed \textsc{HyCal} maintains robust performance with minimal degradation across both settings, demonstrating its resilience against Domain Gravity throughout the incremental learning process. As shown in~\cref{tab:CDE}, \textsc{HyCal} also achieves the best $\text{S}_\text{CDE}$ across all settings, further validating its stable performance and balanced knowledge acquisition. Full numerical results for the Balanced-in-class and Cross-scale imbalance settings are provided in Supp.~\cref{sec:result_bal_cross}.

\subsection{FSCIL}

We also evaluate \textsc{HyCal} under standard FSCIL settings. As shown in~\cref{tab:multishot}, \textsc{HyCal} consistently outperforms existing methods across all shot settings, including low-shot regimes. While prior approaches~\cite{momeni2025continual} exhibit substantial performance degradation when base data is limited, \textsc{HyCal} efficiently extracts and utilizes information even when data is extremely scarce, making it well-suited for challenging few-shot environments.

\subsection{Ablation study}
\noindent \textbf{Distance metric.}
As shown in \cref{tab:dismetric}, cosine similarity and Mahalanobis distance exhibit complementary strengths. Under balanced conditions, Mahalanobis distance outperforms cosine similarity. As imbalance intensifies, their gap narrows, yet combining both through weighted summation consistently outperforms either metric alone. In the setting, Cross-scale imbalance, our method achieves $59.24$ $\text{S}_\text{CDE}$, surpassing both cosine and Mahalanobis, demonstrating synergistic improvements. By dynamically adjusting contributions based on $K_c^t$, \textsc{HyCal} adapts flexibly to varying domain conditions.

\noindent \textbf{Robustness to domain order.}
To evaluate robustness to varying domain sequences, we conduct experiments under random domain orders. As shown in the Supp.~\cref{sec:robust_order}, \textsc{HyCal} consistently strongest or near-strongest across all four random orderings, demonstrating order-invariance. This stems from our hybrid calibration mechanism, which adapts to each domain independently rather than relying on fixed geometric assumptions, effectively mitigating Domain Gravity regardless of domain arrival sequence.

\section{Conclusion}
We introduce \textit{Domain Gravity}, a structural bias arising when pretrained VLM embeddings are adapted across heterogeneous visual disciplines with imbalanced data. This representational asymmetry poses a fundamental barrier for few-shot cross-discipline continual learning. To address this, we established \textbf{XD-VSCIL}, capturing real-world domain heterogeneity and variable supervision.
We propose \textbf{\textsc{HyCal}}, a training-free method combining cosine similarity and Mahalanobis distance to leverage complementary geometric cues—directional alignment and covariance-aware magnitude. Our adaptive weighted summation dynamically adjusts each metric's contribution, enabling flexible adaptation to domain imbalance.
Extensive experiments demonstrate that \textsc{HyCal} delivers consistently superior retention–adaptation performance, remains remarkably stable under severe domain asymmetry, and effortlessly scales to continually expanding heterogeneous data—all without fine-tuning or any additional parameters.

\section*{Acknowledgments}
This work was supported by the Institute of Information \& Communications Technology Planning \& Evaluation (IITP) grant funded by the Korea government (MSIT) [RS-2021-II211341, Artificial Intelligence Graduate School Program (Chung-Ang University)] and by the National Research Foundation of Korea (NRF) grant funded by the Korea government (MSIT) (RS-2025-00556246).
{
    \small
    \bibliographystyle{ieeenat_fullname}
    \bibliography{main}
}

\clearpage
\setcounter{page}{1}

\maketitlesupplementary

\renewcommand{\thetheorem}{\arabic{theorem}}
\setcounter{theorem}{0}

\section{Supplementary on the complementary roles and mutual information between cosine and Mahalanobis measures}
\label{sec:suppl_proof}

This supplementary section provides additional analysis supporting the complementary relationship between cosine similarity and Mahalanobis distance, as formalized in~\cref{thm:independence} and~\cref{thm:mutual-information_suppl}. While both measures are computed from the same embedding pairs, they capture fundamentally different geometric and statistical aspects—angular alignment versus covariance-adjusted magnitude—which together yield richer and more discriminative information than either measure alone.

\subsection{Complementary roles of cosine and Mahalanobis measures}
\label{sec:complementary_role}

\subsubsection{Preliminaries and statistical independence}
Let $X, Y \in \mathbb{R}^d$ be random vectors and define their difference:
\begin{align*}
    \Delta = X - Y.
\end{align*}

We consider the following similarity measures:
\begin{align*}
    C &= \frac{X^\top Y}{\|X\| \|Y\|}, \\
    M &= \sqrt{(X - Y)^\top \Sigma^{-1} (X - Y)}.
\end{align*}

Cosine similarity $C$ captures angular alignment between $X$ and $Y$, while Mahalanobis distance $M$ reflects the covariance-adjusted magnitude of their difference. Although both are derived from the same pair $(X, Y)$, they emphasize distinct geometric aspects.

To obtain a tractable theoretical perspective, we adopt an idealized isotropic surrogate model. Assume that
\[
    \Delta \sim \mathcal{N}(0, \sigma^2 I_d),
\]
and that Mahalanobis distance is computed with the true covariance $\Sigma = \sigma^2 I_d$. In this case,
\[
    M = \sqrt{\Delta^\top (\sigma^2 I_d)^{-1} \Delta}
      = \frac{\|\Delta\|}{\sigma}.
\]
Moreover, we can represent $\Delta$ in polar form as
\[
    \Delta = R U,
\]
where
\[
    R = \|\Delta\| \sim \sigma \cdot \chi_d, \qquad
    U = \frac{\Delta}{\|\Delta\|} \in \mathbb{S}^{d-1}.
\]
For an isotropic Gaussian, the magnitude $R$ and the direction $U$ are statistically independent:
\[
    R \perp U.
\]

In our surrogate model, we further idealize cosine similarity and Mahalanobis distance as being driven by these independent radial and angular factors: Mahalanobis distance depends only on $R$, while cosine similarity depends only on an angular component (which we denote generically by $U$). Formally, we model
\[
    C = f(U), \qquad M = g(R),
\]
for some measurable functions $f$ and $g$. This does not claim that empirical cosine similarity is exactly a function of $\Delta/\|\Delta\|$, but instead captures the intuition that cosine similarity is dominantly governed by angular variation. In contrast, the Mahalanobis distance is primarily determined by radial variation.

\begin{theorem}[Independence of Cosine Similarity and Mahalanobis Distance under an Isotropic Model]
Under the isotropic surrogate model described above, cosine similarity depends only on the angular factor $U$ and Mahalanobis distance depends only on the radial factor $R$:
\[
    C = f(U), \qquad M = g(R).
\]
Since $R \perp U$, it follows that $C \perp M$, and thus their joint entropy decomposes as
\begin{align*}
    H(C, M) = H(C) + H(M).
\end{align*}
\end{theorem}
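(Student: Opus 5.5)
The argument has three steps: establish $R \perp U$ for the isotropic Gaussian, push independence through the measurable maps $f$ and $g$, and then use the standard entropy identity for independent variables.

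\textbf{Step 1: $R \perp U$.} Since $\Delta \sim \mathcal{N}(0,\sigma^2 I_d)$, its density $(2\pi\sigma^2)^{-d/2}\exp(-\|\delta\|^2/2\sigma^2)$ depends on $\delta$ only through $\|\delta\|$. Change to polar coordinates $\delta = r u$ with $r>0$ and $u\in\mathbb{S}^{d-1}$. The volume element factors as $r^{d-1}\,dr\,d\sigma_{d-1}(u)$, where $\sigma_{d-1}$ is surface measure on the sphere. The joint density of $(R,U)$ is therefore proportional to $r^{d-1}e^{-r^2/2\sigma^2}$ times a constant in $u$. This product form gives $R\perp U$, with $U$ uniform on $\mathbb{S}^{d-1}$ and $R/\sigma\sim\chi_d$. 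The event $\Delta=0$ has probability zero, so $U$ is almost surely well defined. Equivalently, one can invoke rotation invariance: for every orthogonal $Q$, $(R, QU)\overset{d}{=}(R,U)$, which forces the conditional law of $U$ given $R$ to be uniform for every value of $R$.

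\textbf{Step 2: $C\perp M$.} Under the surrogate model, $C=f(U)$ and $M=g(R)$ with $f,g$ measurable. With $\Sigma=\sigma^2 I_d$ we can take $g(r)=r/\sigma$. For Borel sets $A,B$,
\[
P(C\in A,\,M\in B)=P\bigl(U\in f^{-1}(A),\,R\in g^{-1}(B)\bigr)=P(C\in A)\,P(M\in B),
\]
so measurable functions of independent variables are independent.

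\textbf{Step 3: entropy decomposition.} By the chain rule, $H(C,M)=H(C)+H(M\mid C)$. Independence means the conditional law of $M$ given $C=c$ equals the marginal law of $M$ for almost every $c$, so $H(M\mid C)=H(M)$. In the discrete case this is immediate from $p(c,m)=p(c)p(m)$ and $\log p(c,m)=\log p(c)+\log p(m)$. In the continuous case I would read $H$ as differential entropy: the joint density factors, and integrating the logarithm of the factored density gives the sum. Equivalently, $I(C;M)=0$, and $H(C,M)=H(C)+H(M)-I(C;M)$.

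\textbf{Main obstacle.} The delicate point is not independence itself but making the entropy statement meaningful. $M$ is absolutely continuous, whereas $C=f(U)$ could be degenerate or mixed depending on $f$. I would therefore add the mild assumption that each of $H(C)$ and $H(M)$ is finite and of the same type, both discrete or both differential. If $C$ has a density, then $(C,M)$ has the product density and the identity holds. If $C$ is discrete, I would use the mixed entropy $H(C)+h(M\mid C)=H(C)+h(M)$. For $h(M)$ to be finite, it suffices to note that $M/\sqrt{\ }$-scaled variables $\sigma\chi_d$ have finite differential entropy, and the $\sigma$ scaling only shifts it by $\log\sigma$.

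A second point I would state explicitly rather than prove: the identification $C=f(U)$ is an assumption of the surrogate model, not a consequence of the definition of cosine similarity between $X$ and $Y$. The theorem is conditional on it, as the paper's framing indicates.
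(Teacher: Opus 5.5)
Your proposal is correct and follows essentially the same route as the paper. The paper factors the isotropic Gaussian density in hyperspherical coordinates to get $R \perp U$, uses that measurable functions of independent variables are independent, and splits the joint entropy using $p(c,m)=p(c)p(m)$. Your additional remarks are refinements, not a different argument: you ask that both entropies be finite and of the same type, which the paper's integral computation tacitly assumes by writing densities for both $C$ and $M$, and you flag that $C=f(U)$ is a modeling assumption. Also, with $\Sigma=\sigma^2 I_d$ one has $M=R/\sigma\sim\chi_d$ exactly, so the garbled scaling remark in your last paragraph is unnecessary.
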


\begin{proof}
For an isotropic Gaussian $\Delta \sim \mathcal{N}(0, \sigma^2 I_d)$, the density of $\Delta \in \mathbb{R}^d$ is rotationally invariant and depends only on $\|\Delta\|$. In hyperspherical coordinates $\Delta = r u$ with $r = \|\Delta\|$ and $u \in \mathbb{S}^{d-1}$, the joint density factorizes as
\begin{align*}
    p(r, u) = c \, r^{d-1} \exp\left(-\frac{r^2}{2\sigma^2}\right) p(u),
\end{align*}
where $c$ is a normalization constant and $p(u)$ is the uniform density on the unit sphere. Hence, $R \perp U$.

By our surrogate model assumption, $C = f(U)$ and $M = g(R)$ for measurable functions $f$ and $g$. Since measurable functions of independent random variables remain independent, we obtain $C \perp M$. Therefore, their joint entropy factorizes:
\begin{align*}
    H(C, M)
    &= - \int p(c, m) \log p(c, m) \, dc \, dm \\
    &= - \int p(c) p(m) \log\bigl(p(c)p(m)\bigr) \, dc \, dm \\
    &= - \int p(c) \log p(c) \, dc
       - \int p(m) \log p(m) \, dm \\
    &= H(C) + H(M).
\end{align*}
\end{proof}

This result formalizes the idea that cosine similarity and Mahalanobis distance capture statistically independent aspects of the pair $(X, Y)$ under an isotropic surrogate model: one is driven by angular variation, and the other by radial variation. In particular, the pair $(C, M)$ encodes the sum of their individual entropies, reflecting complementary information about the underlying embedding geometry.

\subsection{Mutual information interpretation}
\label{sec:mutual}
Let $L \in \{0, 1, \dots, n\}$ denote a class label, and suppose that $L$ depends on the relative relationship between $C$ and $M$. We are interested in the mutual information quantities
\begin{align*}
    I(L; C), \quad I(L; M), \quad I(L; C, M).
\end{align*}

\begin{theorem}[Information Gain from Combination]
\label{thm:mutual-information_suppl}
The mutual information between the label $L$ and the combined similarity vector $(C, M)$ satisfies
\begin{align*}
    I(L; C, M)
    &= I(L; C) + I(L; M \mid C) \\
    &\ge \max\{ I(L; C), I(L; M) \}.
\end{align*}
\end{theorem}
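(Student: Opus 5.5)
The plan is to derive both parts of the statement from the chain rule for mutual information and the nonnegativity of (conditional) mutual information. These identities hold for any jointly distributed $(L, C, M)$, so they need neither the isotropic surrogate model nor the independence result of \cref{thm:independence}. I will treat $L$ as discrete and $(C, M)$ as having a joint density, or more generally a joint law. Mutual information is then defined as the KL divergence between the joint law and the product of the marginals. This avoids any trouble with differential entropies, which can be negative.

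\textbf{Step 1 (chain rule).} Write
\begin{align*}
I(L; C, M) = \mathbb{E}\left[\log \frac{p(L \mid C, M)}{p(L)}\right].
\end{align*}
Factor the ratio inside the logarithm as
\begin{align*}
\frac{p(L \mid C, M)}{p(L)} = \frac{p(L \mid C)}{p(L)} \cdot \frac{p(L \mid C, M)}{p(L \mid C)}.
\end{align*}
Taking expectations term by term gives $I(L; C, M) = I(L; C) + I(L; M \mid C)$. This is the first line of the statement. The same computation with the roles of $C$ and $M$ swapped gives $I(L; C, M) = I(L; M) + I(L; C \mid M)$.

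\textbf{Step 2 (nonnegativity).} Express the conditional mutual information as an averaged KL divergence:
\begin{align*}
I(L; M \mid C) = \mathbb{E}_C\left[ D_{\mathrm{KL}}\bigl(p(L, M \mid C) \,\|\, p(L \mid C)\, p(M \mid C)\bigr) \right].
\end{align*}
This is $\ge 0$ by Gibbs' inequality, via Jensen applied to $-\log$. Hence $I(L; C, M) \ge I(L; C)$. Applying the same argument to the swapped decomposition gives $I(L; C, M) \ge I(L; M)$. Combining the two bounds yields the maximum.

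\textbf{Main obstacle.} The only delicate point is measure-theoretic rigor for a mixed discrete/continuous triple: making sure the conditional densities exist and the expectations are well defined. I would handle this by working with the Radon–Nikodym form of KL divergence, or by assuming finite mutual informations. I would also note that the inequality is not strict in general: equality holds exactly when $L \perp M \mid C$. So the main-text remark about ``strictly more'' information needs that extra condition, and the proof should state the nonstrict version only.
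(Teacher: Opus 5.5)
Your proposal is correct and follows the paper's proof step for step: the chain rule, the conditional mutual information written as an averaged KL divergence, Gibbs' inequality, and the swapped decomposition (the paper's ``by symmetry'') for the bound by $I(L;M)$. Two small refinements: the finiteness you worry about is automatic, since $L$ takes only $n+1$ values and so every mutual information involving $L$ is at most $H(L) \le \log(n+1)$; and equality in the max bound holds exactly when $L \perp M \mid C$ or $L \perp C \mid M$, which confirms your caveat that the main text's ``strictly more'' claim needs an extra condition.
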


\begin{proof}
By the chain rule for mutual information, we have
\begin{align*}
    I(L; C, M) = I(L; C) + I(L; M \mid C).
\end{align*}
By definition of conditional mutual information,
\begin{align*}
    I(L; M \mid C)
    &= \mathbb{E}_{c \sim p(C)} \Bigl[
        D_{\mathrm{KL}}\bigl(
            p(L,M \mid C=c) \,\|\,
        \\
    &\hspace{5.8em}
            p(L \mid C=c)\,p(M \mid C=c)
        \bigr)
    \Bigr].
\end{align*}
and since the Kullback--Leibler divergence satisfies $D_{\mathrm{KL}}(\cdot \,\|\, \cdot) \ge 0$, it follows that
\[
    I(L; M \mid C) \ge 0.
\]
Therefore,
\begin{align*}
    I(L; C, M)
    &= I(L; C) + I(L; M \mid C) \\
    &\ge I(L; C).
\end{align*}
By symmetry, $I(L; C, M) \ge I(L; M)$ as well, which yields
\begin{align*}
    I(L; C, M) \ge \max\{ I(L; C), I(L; M) \}.
\end{align*}
\end{proof}

Together with the independence result above, this theorem shows that cosine similarity and Mahalanobis distance are information-theoretically complementary: under the isotropic surrogate model, they encode independent angular and radial factors, and when labels depend on both, their combination $(C, M)$ provides at least as much mutual information with $L$ as either measure alone. This complementarity enables richer information extraction from frozen embeddings while keeping the backbone entirely fixed.

\subsection{Real VLM embeddings: Why complementarity becomes stronger}
\label{sec:strong_complementarity}
The independence result above is derived under an idealized isotropic Gaussian surrogate, in which the angular and radial components factorize. In real pretrained VLMs, however, the embedding geometry is inherently anisotropic: feature norms exhibit heavy-tailed distributions, covariance structures vary significantly across domains, and domain shifts induce direction–magnitude distortions in heterogeneous ways~\cite{papadimitriou2025interpreting, qi2025beyond}. 

These anisotropic effects break the exact statistical independence between cosine similarity and Mahalanobis distance. Yet, paradoxically, this makes the two measures \emph{more} complementary in practice. Under anisotropy, directional cues captured by cosine similarity and covariance-aware magnitude cues captured by Mahalanobis distance tend to decouple, encoding increasingly orthogonal information regarding the data distribution, especially under domain shifts~\cite{park2023understanding,levi2025double}. 

Therefore, the theoretical independence shown in the isotropic case should be viewed as a conceptual lower bound of complementarity. In real VLM embeddings, the geometric anisotropy amplifies the distinct information encoded by each measure, making their combination strictly more discriminative than either measure alone.

\begin{figure*}[t]
  \centering
  \includegraphics[width=0.90\linewidth]{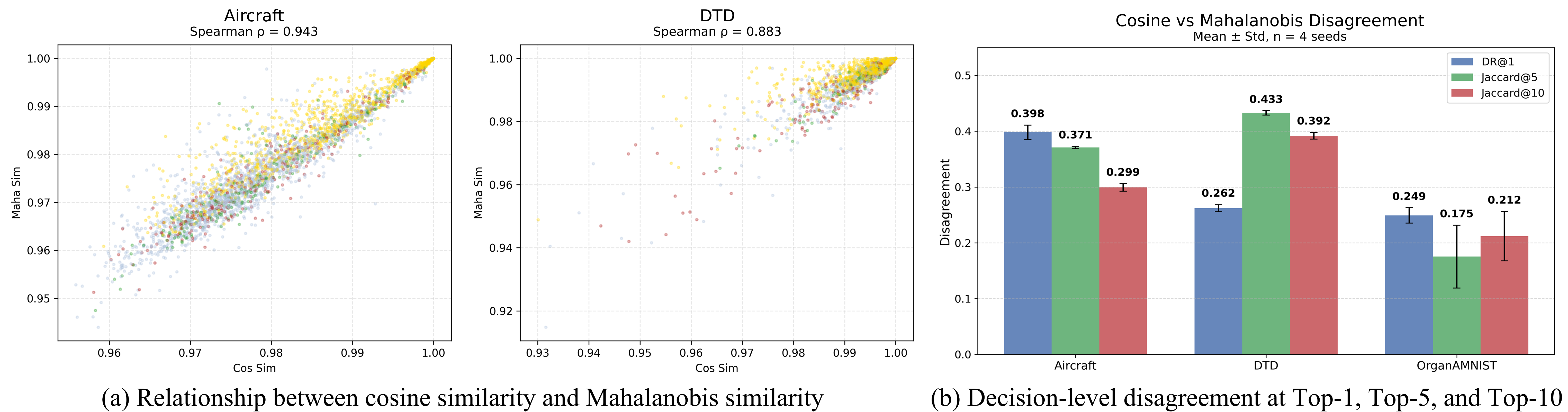}
   \caption{Cosine–Mahalanobis: (a) Relationship and (b) Ranking where cosine-correct (green) and Mahalanobis-correct (red) samples only partially overlap (yellow).}
   \label{fig:cos-maha}
\end{figure*}

\begin{figure*}[t]
  \centering
  \includegraphics[width=0.80\linewidth]{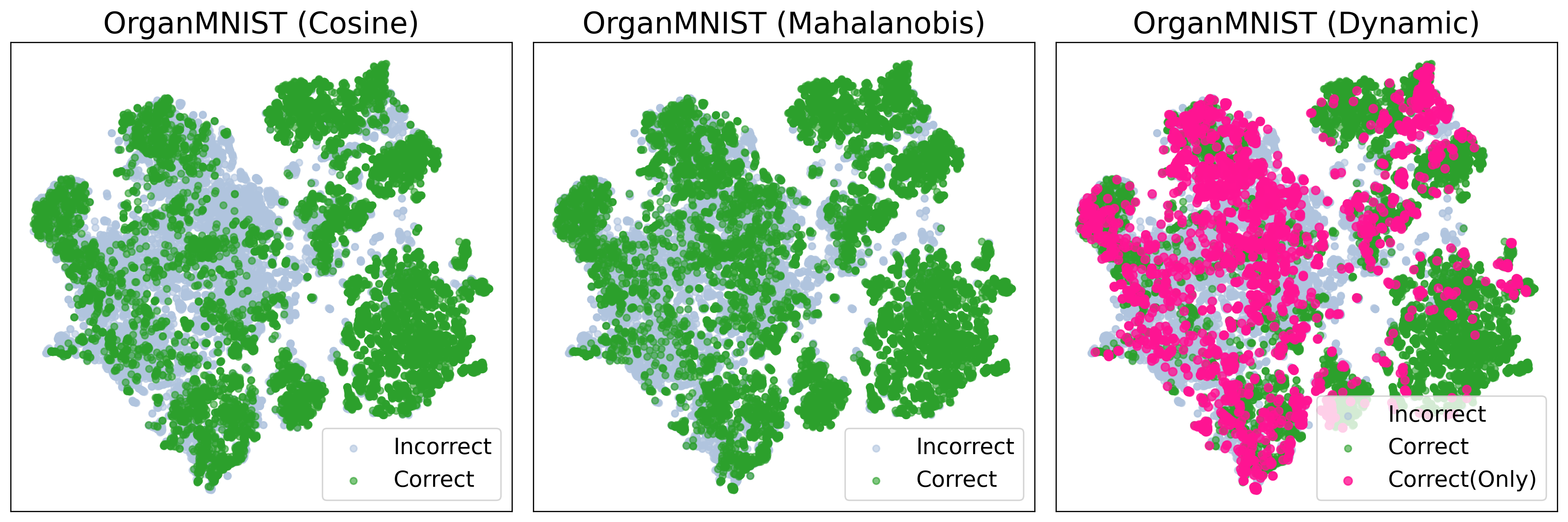}
   \caption{Qualitative comparison of distance metrics. Pink points show samples only correctly classified by dynamic summation.} 
   \label{fig:onecol}
\end{figure*}

\subsection{Theoretical--empirical connection in \textsc{HyCal}}
\label{sec:connection}

\cref{sec:complementary_role,sec:mutual,sec:strong_complementarity} establish that cosine similarity and Mahalanobis distance provide different but complementary views of prototype matching: cosine similarity captures directional alignment, whereas Mahalanobis distance measures deviation relative to class-specific covariance. \textsc{HyCal} translates this observation into a practical inference rule by combining both cues when evaluating each seen class. This is particularly useful in XD-VSCIL, where heterogeneous domains arrive sequentially and often with severe imbalance. Under such conditions, domain-level asymmetry can distort the representation space and make a single metric overly biased toward only one geometric aspect. By reusing frozen pretrained representations and calibrating decisions at the prototype level, \textsc{HyCal} reduces sensitivity to such domain-level dominance without requiring backbone adaptation or task-specific retraining.

The empirical results are consistent with this interpretation, while also clarifying the limits of the theory. Real CLIP embeddings are anisotropic, so our analysis is not intended as an exact statistical description of the feature space. 
Nevertheless, the diagnostics in \cref{fig:cos-maha,fig:onecol} show that cosine similarity and Mahalanobis distance make complementary decisions: they disagree on nontrivial subsets of samples and recover different correct predictions, especially in sparse domains such as Aircraft, DTD, and OrganMNIST. 
This complementarity translates into strong behavior under challenging imbalance settings, where \textsc{HyCal} remains robust on both overall metrics and low-data domains. 
\cref{tab:rebuttal-table} further shows that the same qualitative tendency extends across pretrained models. With \texttt{OpenCLIP ViT-B/16} pretrained on \texttt{DataComp-1B}~\cite{ilharco_gabriel_2021_5143773}, \textsc{HyCal} achieves the best performance across all three metrics, and under the X-TAIL~\cite{xu2024advancing} setting it attains the best Last Acc. while remaining competitive on the other metrics. Taken together, these results support the practical value of the theoretical complementarity developed in this section, while making clear that the theory serves primarily as intuition for real VLM embeddings rather than a strict independence claim.

\begin{table}[t]
\caption{Performance under High-scale domain imbalance.}
\label{tab:rebuttal-table}
\centering
\renewcommand{\arraystretch}{1.2}
\setlength{\tabcolsep}{1mm}
\resizebox{0.99\linewidth}{!}{%
\begin{tabular}{lcccccc}
\hline
             & \multicolumn{3}{c}{\textbf{PTM: \texttt{DataComp-1B}~\cite{ilharco_gabriel_2021_5143773}}}   & \multicolumn{3}{c}{\textbf{Setting: X-TAIL~\cite{xu2024advancing}}}     \\
Method       & Last Acc.      & Avg Acc.       & $\text{S}_\text{CDE}$     & Last Acc.      & Avg Acc.       & $\text{S}_\text{CDE}$     \\ \hline
Primal-RAIL~\cite{xu2024advancing}  & \textcolor{gray}{65.89}          & \textcolor{gray}{58.10}    & \textcolor{gray}{56.40}    & \textcolor{gray}{73.82}          & \textcolor{gray}{66.79}          & \textcolor{gray}{69.48}          \\\hdashline
RanPAC~\cite{mcdonnell2023ranpac}   & {\ul 66.20}    & {\ul 55.78}          & {\ul 55.64}          & 69.74          & {\ul 67.60}    & 68.20          \\
KLDA~\cite{momeni2025continual}   & 63.95          & 41.24          & 51.50          & {\ul 76.71}          & \textbf{71.38} & \textbf{72.42} \\
\textbf{\textsc{HyCal} (Ours)}& \textbf{68.88} & \textbf{59.89} & \textbf{57.94} & \textbf{77.44} & 64.41          & {\ul 70.66}    \\ \hline
\end{tabular}}
\end{table}

\section{Additional ablation study}

To further evaluate the stability and design choices of \textsc{HyCal}, we conduct ablation studies on three components: robustness to domain order, sensitivity to hyperparameters, and the effect of different image–text embedding fusion strategies. These analyses assess whether the method maintains consistent performance under different task permutations, whether its calibration behavior is sensitive to the selection of $(\alpha,\beta)$, and whether our sum-based image–text fusion scheme provides advantages over the commonly used concatenation strategy.

\begin{figure}[t]
\centering
\includegraphics[width=0.99\linewidth]{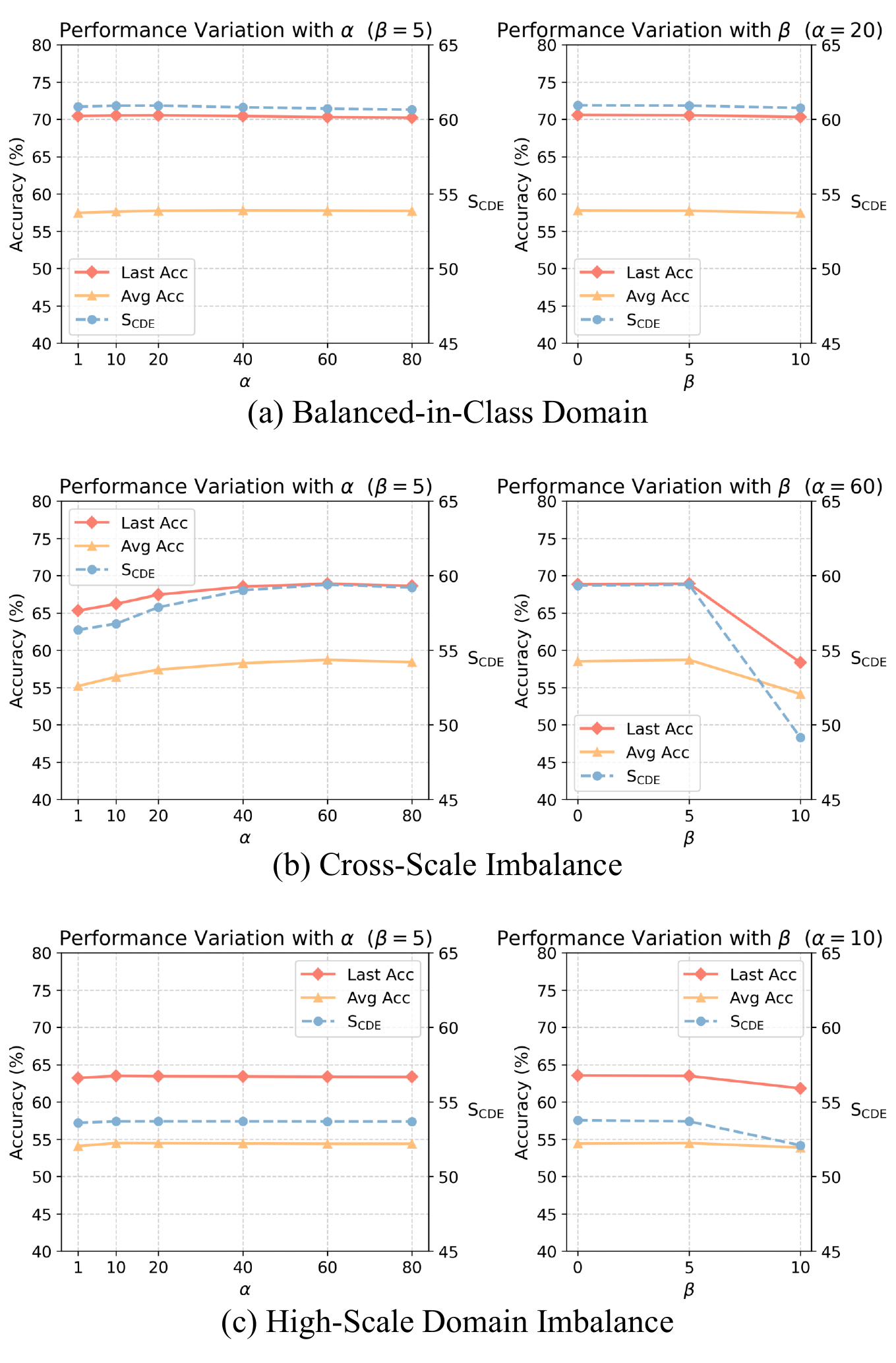} 
\caption{Hyperparameter sensitivity analysis for $\alpha$ and $\beta$. Each curve shows the performance variation when sweeping one parameter while keeping the other fixed.}
\label{fig:hyperparmeter}
\end{figure}

\begin{figure}[t]
\centering
\includegraphics[width=0.98\linewidth]{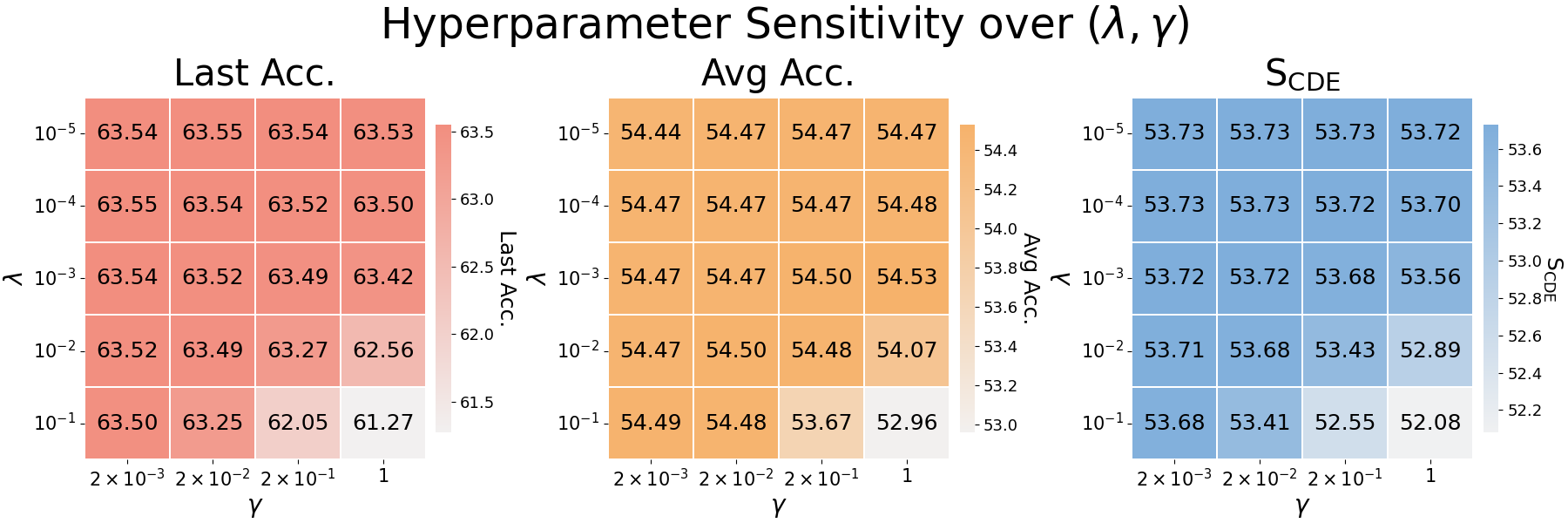}
\caption{Hyperparameter sensitivity analysis for $\lambda$ and $\gamma$ under the High-scale domain imbalance setting. Heatmaps show Last Acc., Avg Acc., and $\text{S}_{\text{CDE}}$ over different $(\lambda, \gamma)$ configurations.}
\label{fig:hyperparmeter_lambda_gamma}
\end{figure}

\subsection{Robustness to domain order}
\label{sec:robust_order}
To examine the robustness of each method under variations in domain sequences, we evaluate all approaches across four random domain orders, derived from fixed random seeds \(\{0,1,42,1993\}\). 
Each seed generates a unique permutation of the eight domains---Aircraft, ArtBench, DTD, EuroSAT, Galaxy, MNIST, Organ, and Flower---which are presented sequentially as incremental tasks. These four permutations are referred to as Random order I–IV. 
\begin{itemize}
    \item \textit{Random order I (seed = 0)}\\
    Galaxy $\rightarrow$ ArtBench $\rightarrow$ MNIST $\rightarrow$ DTD $\rightarrow$ Aircraft $\rightarrow$ EuroSAT $\rightarrow$ Flower $\rightarrow$ Organ

    \item \textit{Random order II (seed = 1)}\\
    EuroSAT $\rightarrow$ Organ $\rightarrow$ ArtBench $\rightarrow$ MNIST $\rightarrow$ Flower $\rightarrow$ Aircraft $\rightarrow$ Galaxy $\rightarrow$ DTD

    \item \textit{Random order III (seed = 42)}\\
    EuroSAT $\rightarrow$ Galaxy $\rightarrow$ Organ $\rightarrow$ Flower $\rightarrow$ DTD $\rightarrow$ MNIST $\rightarrow$ Aircraft $\rightarrow$ ArtBench

    \item \textit{Random order IV (seed = 1993)}\\
    ArtBench $\rightarrow$ MNIST $\rightarrow$ Organ $\rightarrow$ Aircraft $\rightarrow$ EuroSAT $\rightarrow$ DTD $\rightarrow$ Galaxy $\rightarrow$ Flower
\end{itemize}
The overall performance of all compared methods under Random order I–IV is summarized in \cref{tab:randorder}, covering three XD-VSCIL settings. 

Across all domain orders and settings, \textsc{HyCal} consistently achieves the highest performance, indicating that its calibration mechanism remains stable and effective regardless of task permutation. Furthermore, to provide a more fine-grained perspective, \cref{tab:randomorder_V2} reports the class-wise performance under the High-scale domain imbalanced setting for each of the four random domain orders.

\begin{table}[t!]
\centering
\caption{Comparison of fusion strategies under three domain conditions.}
\label{tab:fusion}
\scriptsize
\resizebox{0.98\columnwidth}{!}{%
\begin{tabular}{lccc}
\toprule
Fusion   & Last Acc. & Avg Acc. & $\boldsymbol{\text{S}_{\text{CDE}}}$ \\
\midrule
\midrule\rowcolor{gray!8}
\multicolumn{4}{c}{\textbf{Balanced-in-Class Domain}} \\ 
\midrule
Concat                 & $70.25_{\pm0.20}$ & $57.33_{\pm0.32}$ & $60.70_{\pm0.15}$ \\
\textbf{Sum (Ours)}    & $\mathbf{70.54_{\pm0.23}}$ & $\mathbf{57.75_{\pm0.45}}$ & $\mathbf{60.92_{\pm0.17}}$ \\
\midrule
\midrule\rowcolor{gray!16}
\multicolumn{4}{c}{\textbf{Cross-Scale Imbalance}} \\
\midrule

Concat                 & $68.30_{\pm1.02}$ & $57.96_{\pm1.16}$ & $58.95_{\pm0.79}$ \\
\textbf{Sum (Ours)}    & $\mathbf{68.69_{\pm0.44}}$ & $\mathbf{58.43_{\pm0.51}}$ & $\mathbf{59.24_{\pm0.41}}$ \\

\midrule
\midrule\rowcolor{gray!24}
\multicolumn{4}{c}{\textbf{High-Scale Domain Imbalance}} \\
\midrule
Concat                 & $63.06_{\pm0.63}$ & $53.94_{\pm0.76}$ & $53.39_{\pm0.71}$ \\
\textbf{Sum (Ours)}    & $\mathbf{63.50_{\pm0.73}}$ & $\mathbf{54.48_{\pm0.78}}$ & $\mathbf{53.70_{\pm0.74}}$ \\
\bottomrule
\end{tabular}
}
\end{table}

\begin{table*}[t]
\caption{Performance comparison across random-order domain setting. Results are averaged over 4 seeds with 95\% confidence intervals in subscripts.}
\label{tab:randorder}
\centering
\normalsize
\setlength{\tabcolsep}{1pt}
\renewcommand{\arraystretch}{1.1}
\resizebox{\textwidth}{!}{%
\begin{tabular}{l cccc cccc cccc ccc}
\toprule 
\midrule
 & \multicolumn{4}{c}{\textit{Random Order I}}
 & \multicolumn{4}{c}{\textit{Random Order II}}
 & \multicolumn{4}{c}{\textit{Random Order III}}
 & \multicolumn{3}{c}{\textit{Random Order IV}} \\ 
 & Last Acc. & Avg Acc. & $\text{S}_\text{CDE}$ &
 & Last Acc. & Avg Acc. & $\text{S}_\text{CDE}$ &
 & Last Acc. & Avg Acc. & $\text{S}_\text{CDE}$ &
 & Last Acc. & Avg Acc. & $\text{S}_\text{CDE}$ \\
\midrule\rowcolor{gray!8}
\multicolumn{16}{c}{\textbf{Balanced-in-Class Domain}} 
\\ \midrule

Primal-RAIL~\cite{xu2024advancing}
 & \textcolor{gray}{$67.90_{\pm0.59}$} & \textcolor{gray}{$63.53_{\pm0.85}$} & \textcolor{gray}{$59.77_{\pm0.40}$} &
 & \textcolor{gray}{$67.67_{\pm0.38}$}& \textcolor{gray}{$76.41_{\pm0.31}$} & \textcolor{gray}{$59.45_{\pm0.26}$} &
 & \textcolor{gray}{$67.63_{\pm0.87}$} & \textcolor{gray}{$74.26_{\pm1.00}$} & \textcolor{gray}{$59.42_{\pm0.51}$} &
 & \textcolor{gray}{$67.58_{\pm0.56}$} & \textcolor{gray}{$68.18_{\pm0.38}$} & \textcolor{gray}{$59.78_{\pm0.40}$} \\\hdashline

FeCAM~\cite{goswami2023fecam}
 & $15.26_{\pm2.99}$ & $18.17_{\pm2.30}$ & $10.16_{\pm1.99}$&
 & $12.63_{\pm2.07}$ & $21.95_{\pm6.73}$ & $10.24_{\pm1.20}$&
 & $13.97_{\pm4.85}$ & $20.81_{\pm5.63}$ & $9.30_{\pm3.23}$&
 & $11.70_{\pm5.90}$ & $27.09_{\pm7.01}$ & $12.90_{\pm4.08}$ \\

RanPAC~\cite{mcdonnell2023ranpac}
 & $67.60_{\pm0.25}$ & $\underline{63.43_{\pm0.13}}$ & $\underline{59.13_{\pm0.17}}$&
 & $67.21_{\pm1.00}$ & $\underline{75.84_{\pm1.27}}$ & $\underline{59.05_{\pm0.63}}$&
 & $66.03_{\pm3.00}$ & $\underline{74.16_{\pm0.18}}$ & $\underline{58.71_{\pm1.16}}$&
 & $67.30_{\pm0.42}$ & $\underline{66.81_{\pm0.60}}$ & $\underline{59.04_{\pm0.20}}$ \\

KLDA~\cite{momeni2025continual}
 & $\mathbf{70.80_{\pm0.30}}$ & $51.39_{\pm0.36}$ & $56.41_{\pm0.06}$&
 & $\underline{70.14_{\pm0.21}}$ & $58.11_{\pm0.47}$ & $53.47_{\pm0.28}$&
 & $\underline{70.56_{\pm0.59}}$ & $57.61_{\pm0.66}$ & $55.04_{\pm0.42}$&
 & $\underline{70.50_{\pm0.59}}$ & $53.95_{\pm0.27}$ & $54.42_{\pm0.37}$ \\

\textbf{\textsc{HyCal} (Ours)}
 & $\underline{70.58_{\pm0.32}}$ & $\mathbf{64.40_{\pm0.59}}$ & $\mathbf{60.91_{\pm0.23}}$&
 & $\mathbf{70.53_{\pm0.24}}$ & $\mathbf{78.53_{\pm0.33}}$ & $\mathbf{60.86_{\pm0.16}}$&
 & $\mathbf{70.69_{\pm0.38}}$ & $\mathbf{76.18_{\pm0.24}}$ & $\mathbf{61.07_{\pm0.27}}$&
 & $\mathbf{70.65_{\pm0.30}}$ & $\mathbf{70.24_{\pm0.35}}$ & $\mathbf{60.95_{\pm0.23}}$ \\

\midrule \midrule
\rowcolor{gray!16}
\multicolumn{16}{c}{\textbf{Cross-Scale Imbalance}} \\
\midrule

Primal-RAIL~\cite{xu2024advancing}
 & \textcolor{gray}{$58.40_{\pm1.54}$} & \textcolor{gray}{$53.57_{\pm3.12}$} & \textcolor{gray}{$52.76_{\pm1.65}$}&\quad
 & \textcolor{gray}{$58.31_{\pm1.18}$} & \textcolor{gray}{$67.40_{\pm1.81}$} & \textcolor{gray}{$52.74_{\pm1.25}$}&\quad
 & \textcolor{gray}{$58.09_{\pm1.16}$} & \textcolor{gray}{$64.70_{\pm1.94}$} & \textcolor{gray}{$52.05_{\pm1.49}$}&\quad
 & \textcolor{gray}{$58.14_{\pm0.68}$} & \textcolor{gray}{$59.18_{\pm0.56}$} & \textcolor{gray}{$52.94_{\pm0.92}$} \\\hdashline

FeCAM~\cite{goswami2023fecam}
 & $16.65_{\pm3.51}$ & $19.75_{\pm5.18}$ & $13.41_{\pm3.88}$&\quad
 & $17.56_{\pm6.27}$ & $22.46_{\pm10.36}$ & $13.57_{\pm4.80}$&\quad
 & $16.98_{\pm3.92}$ & $20.12_{\pm7.07}$ & $13.03_{\pm3.41}$&\quad
 & $17.31_{\pm2.68}$ & $21.23_{\pm6.35}$ & $13.85_{\pm2.13}$ \\

RanPAC~\cite{mcdonnell2023ranpac}
 & $61.77_{\pm6.12}$ & $\underline{56.97_{\pm4.13}}$ & $\underline{57.49_{\pm4.03}}$&\quad
 & $59.27_{\pm7.85}$ & $\underline{69.77_{\pm3.21}}$ & $\underline{55.11_{\pm4.44}}$&\quad
 & $59.39_{\pm7.56}$ & $\underline{68.39_{\pm3.33}}$ & $\underline{56.46_{\pm4.52}}$&\quad
 & $61.50_{\pm5.64}$ & $\underline{61.49_{\pm3.46}}$ & $\underline{54.98_{\pm2.97}}$ \\

KLDA~\cite{momeni2025continual}
 & $\mathbf{69.26_{\pm0.44}}$ & $43.53_{\pm0.74}$ & $48.14_{\pm0.90}$&\quad
 & $\underline{69.34_{\pm0.78}}$ & $46.79_{\pm0.61}$ & $47.68_{\pm0.36}$&\quad
 & $\mathbf{69.31_{\pm0.16}}$ & $47.67_{\pm0.38}$ & $48.29_{\pm0.39}$&\quad
 & $\mathbf{69.12_{\pm1.04}}$ & $44.27_{\pm0.55}$ & $46.76_{\pm1.18}$ \\

\textbf{\textsc{HyCal} (Ours)}
 & $\underline{68.69_{\pm0.96}}$ & $\mathbf{61.27_{\pm1.76}}$ & $\mathbf{58.99_{\pm0.71}}$&{\quad}
 & $\mathbf{68.60_{\pm0.58}}$ & $\mathbf{73.70_{\pm0.35}}$ & $\mathbf{58.99_{\pm0.46}}$&{\quad}
 & $\underline{68.64_{\pm0.85}}$ & $\mathbf{71.94_{\pm0.81}}$ & $\mathbf{59.06_{\pm0.76}}$&{\quad}
 & $\underline{68.64_{\pm1.39}}$ & $\mathbf{66.52_{\pm1.06}}$ & $\mathbf{58.97_{\pm1.08}}$ \\

\midrule \midrule
\rowcolor{gray!24}
\multicolumn{16}{c}{\textbf{High-Scale Domain Imbalance}} \\
\midrule

Primal-RAIL~\cite{xu2024advancing}
 & \textcolor{gray}{$59.80_{\pm1.18}$} & \textcolor{gray}{$52.60_{\pm3.58}$} & \textcolor{gray}{$52.18_{\pm1.23}$}&
 & \textcolor{gray}{$59.28_{\pm1.57}$} & \textcolor{gray}{$65.78_{\pm1.53}$} & \textcolor{gray}{$51.51_{\pm1.38}$}&
 & \textcolor{gray}{$59.62_{\pm0.55}$} & \textcolor{gray}{$61.48_{\pm1.27}$} & \textcolor{gray}{$52.10_{\pm0.45}$}&
 & \textcolor{gray}{$60.28_{\pm0.61}$} & \textcolor{gray}{$61.58_{\pm0.48}$} & \textcolor{gray}{$52.74_{\pm0.60}$} \\\hdashline

FeCAM~\cite{goswami2023fecam}
 & $5.68_{\pm0.73}$ & $8.93_{\pm0.74}$ & $2.53_{\pm0.21}$&
 & $5.53_{\pm1.31}$ & $7.90_{\pm0.25}$ & $2.70_{\pm2.70}$&
 & $5.66_{\pm0.40}$ & $8.64_{\pm3.29}$ & $2.35_{\pm0.56}$&
 & $6.00_{\pm1.11}$ & $10.15_{\pm1.25}$ & $2.94_{\pm0.57}$ \\

RanPAC~\cite{mcdonnell2023ranpac}
 & $59.80_{\pm1.68}$ & $\underline{51.61_{\pm1.85}}$ & $\mathbf{54.40_{\pm1.62}}$&
 & $59.79_{\pm1.38}$ & $\underline{65.28_{\pm1.53}}$ & $\underline{52.81_{\pm0.65}}$&
 & $59.77_{\pm1.51}$ & $\underline{63.91_{\pm1.21}}$ & $\mathbf{54.37_{\pm1.12}}$&
 & $60.31_{\pm0.63}$ & $\underline{56.08_{\pm2.20}}$ & $\underline{51.24_{\pm0.91}}$ \\

KLDA~\cite{momeni2025continual}
 & $\underline{61.68_{\pm1.24}}$ & $31.03_{\pm0.81}$ & $41.85_{\pm0.94}$&
 & $\underline{60.67_{\pm1.13}}$ & $32.89_{\pm0.74}$ & $34.25_{\pm1.08}$&
 & $\underline{61.13_{\pm0.52}}$ & $39.60_{\pm1.12}$ & $40.81_{\pm0.28}$&
 & $\underline{60.99_{\pm0.68}}$ & $31.44_{\pm1.53}$ & $39.28_{\pm0.51}$ \\

\textbf{\textsc{HyCal} (Ours)}
 & $\mathbf{63.39_{\pm0.48}}$ & $\mathbf{54.96_{\pm0.81}}$ & $\underline{53.47_{\pm0.37}}$&
 & $\mathbf{63.01_{\pm1.08}}$ & $\mathbf{65.71_{\pm1.17}}$ & $\mathbf{53.13_{\pm0.94}}$&
 & $\mathbf{63.15_{\pm0.74}}$ & $\mathbf{64.73_{\pm1.70}}$ & $\underline{53.15_{\pm0.73}}$&
 & $\mathbf{63.03_{\pm1.11}}$ & $\mathbf{59.69_{\pm1.70}}$ & $\mathbf{53.06_{\pm1.22}}$ \\

\bottomrule
\end{tabular}
}

\end{table*}

\begin{table*}[t]
\centering
\caption{Performance across random domain orders (I–IV) under the High-scale domain imbalanced setting. Results are averaged over 4 seeds with 95\% confidence intervals in subscripts. Domain names are abbreviated.}
\vspace{-3mm}
\label{tab:randomorder_V2}
\setlength{\tabcolsep}{1mm}
\renewcommand{\arraystretch}{1.0}
\resizebox{0.98\textwidth}{!}{%
\begin{tabular}{l c c c c c c c c c c c}
\toprule\midrule
\rowcolor{gray!24}
\multicolumn{12}{c}{\textit{Random Order I}} 
\\\midrule
& \rotatebox{0}{Galaxy}
& \rotatebox{0}{ArtBench}
& \rotatebox{0}{MNIST}
& \rotatebox{0}{DTD}
& \rotatebox{0}{Aircraft}
& \rotatebox{0}{EuroSAT}
& \rotatebox{0}{Flower}
& \rotatebox{0}{Organ} &{\quad}
& \rotatebox{0}{\textbf{Average}}
& \rotatebox{0}{{$\boldsymbol{\sigma}$}} \\ \toprule

\multicolumn{1}{l}{\quad Zero-shot}
  & 9.80 & 50.88 & 44.01 & 41.90 & 23.91 & 37.58 & 67.40 & 17.97&{\quad} & 56.41 & 18.76
\\\midrule\hline

\multicolumn{11}{l}{\textbf{Average Acc.}}\\

\quad Primal-RAIL~\cite{xu2024advancing}
  & \textcolor{gray}{$25.37_{\pm 10.57}$}
  & \textcolor{gray}{$60.23_{\pm 0.72}$}
  & \textcolor{gray}{$76.99_{\pm 6.49}$}
  & \textcolor{gray}{$68.93_{\pm 1.35}$}
  & \textcolor{gray}{$37.45_{\pm 1.18}$}
  & \textcolor{gray}{$72.42_{\pm 1.14}$}
  & \textcolor{gray}{$90.49_{\pm 0.58}$}
  & \textcolor{gray}{$53.83_{\pm 3.88}$}&{\quad}
  & \textcolor{gray}{$52.60_{\pm 3.58}$}
  & \textcolor{gray}{$21.47_{\pm 2.62}$}
\\\hdashline

\quad FeCAM~\cite{goswami2023fecam}
  & $2.89_{\pm 1.46}$ & $5.76_{\pm 0.64}$ & $0.00_{\pm 0.00}$ & $31.77_{\pm 5.68}$
  & $11.88_{\pm 0.98}$ & $0.00_{\pm 0.00}$ & $1.76_{\pm 3.04}$ & $0.00_{\pm 0.00}$&{\quad}
  & $8.93_{\pm 0.74}$  & $10.92_{\pm 1.86}$ \\

\quad RanPAC~\cite{mcdonnell2023ranpac}
  & $\underline{37.02_{\pm 5.25}}$ & $\underline{40.30_{\pm 5.96}}$ & $\mathbf{77.63_{\pm 3.51}}$ & $\underline{66.33_{\pm 0.55}}$
  & $35.95_{\pm 1.11}$ & $71.66_{\pm 4.17}$ & $92.45_{\pm 0.48}$ & $56.07_{\pm 3.90}$&{\quad}
  & $\underline{51.61_{\pm 1.85}}$ & $20.96_{\pm 0.72}$ \\

\quad KLDA~\cite{momeni2025continual}
  & $19.98_{\pm 0.50}$ & $22.89_{\pm 2.32}$ & $53.28_{\pm 3.68}$ & $54.83_{\pm 1.18}$
  & $\underline{39.53_{\pm 1.95}}$ & $\mathbf{76.60_{\pm 2.75}}$ & $\mathbf{95.74_{\pm 0.50}}$ & $\mathbf{58.78_{\pm 2.59}}$&{\quad}
  & $31.03_{\pm 0.81}$ & $25.65_{\pm 0.52}$ \\

\quad \textsc{HyCal}(Ours)
  & $\mathbf{38.32_{\pm 2.02}}$ & $\mathbf{50.27_{\pm 2.52}}$ & $\underline{77.42_{\pm 5.24}}$ & $\mathbf{69.59_{\pm 0.23}}$
  & $\mathbf{41.67_{\pm 0.34}}$ & $\underline{76.59_{\pm 3.27}}$ & $\underline{95.69_{\pm 0.38}}$ & $\underline{58.42_{\pm 2.72}}$&{\quad}
  & $\mathbf{54.96_{\pm 0.81}}$ & $19.87_{\pm 0.65}$ \\\midrule\hline

\multicolumn{11}{l}{\textbf{Last Acc.}}\\

\quad Primal-RAIL~\cite{xu2024advancing}
  & \textcolor{gray}{$19.90_{\pm 8.04}$}
  & \textcolor{gray}{$60.32_{\pm 0.59}$}
  & \textcolor{gray}{$76.76_{\pm 6.69}$}
  & \textcolor{gray}{$68.03_{\pm 1.51}$}
  & \textcolor{gray}{$37.23_{\pm 1.14}$}
  & \textcolor{gray}{$71.89_{\pm 1.64}$}
  & \textcolor{gray}{$90.46_{\pm 0.54}$}
  & \textcolor{gray}{$53.83_{\pm 3.88}$}&{\quad}
  & \textcolor{gray}{$59.80_{\pm 1.18}$}
  & \textcolor{gray}{$22.69_{\pm 2.04}$}
\\\hdashline

\quad FeCAM~\cite{goswami2023fecam}
  & $0.00_{\pm 0.00}$ & $0.00_{\pm 0.00}$ & $0.00_{\pm 0.00}$ & $31.77_{\pm 5.68}$
  & $11.88_{\pm 0.98}$ & $0.00_{\pm 0.00}$ & $1.76_{\pm 3.04}$ & $0.00_{\pm 0.00}$&{\quad}
  & $5.68_{\pm 0.73}$  & $11.34_{\pm 1.83}$ \\

\quad RanPAC~\cite{mcdonnell2023ranpac}
  & $\underline{37.42_{\pm 4.29}}$ & $\underline{40.41_{\pm 6.27}}$ & $76.91_{\pm 3.83}$ & $67.02_{\pm 1.02}$
  & $36.87_{\pm 1.04}$ & $71.24_{\pm 4.36}$ & $92.44_{\pm 0.49}$ & $56.07_{\pm 3.90}$&{\quad}
  & $59.80_{\pm 1.68}$ & $20.64_{\pm 0.71}$ \\

\quad KLDA~\cite{momeni2025continual}
  & $36.13_{\pm 2.28}$ & $37.82_{\pm 4.62}$ & $\mathbf{78.97_{\pm 5.62}}$ & $\mathbf{69.39_{\pm 1.70}}$
  & $\underline{39.86_{\pm 1.98}}$ & $\mathbf{76.74_{\pm 2.68}}$ & $\mathbf{95.77_{\pm 0.41}}$ & $\mathbf{58.78_{\pm 2.59}}$&{\quad}
  & $\underline{61.68_{\pm 1.24}}$ & $22.27_{\pm 0.89}$ \\

\quad \textsc{HyCal}(Ours)
  & $\mathbf{38.15_{\pm 1.82}}$ & $\mathbf{49.66_{\pm 2.42}}$ & $\underline{77.72_{\pm 5.20}}$ & $\underline{69.27_{\pm 0.43}}$
  & $\mathbf{41.67_{\pm 0.36}}$ & $\underline{76.55_{\pm 3.27}}$ & $\underline{95.69_{\pm 0.38}}$ & $\underline{58.42_{\pm 2.72}}$&{\quad}
  & $\mathbf{63.39_{\pm 0.48}}$ & $19.98_{\pm 0.64}$ \\


\midrule \midrule
\rowcolor{gray!24}
\multicolumn{12}{c}{\textit{Random Order II}} 
\\\midrule
& \rotatebox{0}{EuroSAT}
& \rotatebox{0}{Organ}
& \rotatebox{0}{ArtBench}
& \rotatebox{0}{MNIST}
& \rotatebox{0}{Flower}
& \rotatebox{0}{Aircraft}
& \rotatebox{0}{Galaxy}
& \rotatebox{0}{DTD}&{\quad}
& \rotatebox{0}{\textbf{Average}}
& \rotatebox{0}{$\boldsymbol{\sigma}$} \\ \toprule

\multicolumn{1}{l}{\quad Zero-shot}
  & 37.58 & 17.97 & 50.88 & 44.01 & 67.40 & 23.91 & 9.80 & 41.90&{\quad} & 56.41 & 18.76
\\\midrule\hline

\multicolumn{11}{l}{\textbf{Average Acc.}}\\

\quad Primal-RAIL~\cite{xu2024advancing}
  & \textcolor{gray}{$72.84_{\pm 3.93}$}
  & \textcolor{gray}{$54.25_{\pm 3.81}$}
  & \textcolor{gray}{$60.03_{\pm 0.66}$}
  & \textcolor{gray}{$77.83_{\pm 6.33}$}
  & \textcolor{gray}{$91.75_{\pm 0.58}$}
  & \textcolor{gray}{$37.86_{\pm 0.67}$}
  & \textcolor{gray}{$22.22_{\pm 10.58}$}
  & \textcolor{gray}{$67.77_{\pm 1.44}$}&{\quad}
  & \textcolor{gray}{$65.78_{\pm 0.87}$}
  & \textcolor{gray}{$22.48_{\pm 2.61}$}
\\\hdashline

\quad FeCAM~\cite{goswami2023fecam}
  & $3.24_{\pm 1.05}$ & $2.67_{\pm 1.31}$ & $4.71_{\pm 1.08}$ & $0.00_{\pm 0.00}$
  & $23.22_{\pm 7.27}$ & $12.44_{\pm 1.70}$ & $0.00_{\pm 0.00}$ & $31.24_{\pm 8.07}$&{\quad}
  & $7.90_{\pm 0.25}$ & $11.80_{\pm 2.76}$ \\

\quad RanPAC~\cite{mcdonnell2023ranpac}
  & $\underline{72.90_{\pm 3.17}}$ & $\underline{56.52_{\pm 3.40}}$ & \underline{$45.90_{\pm 3.80}$}
  & $\mathbf{80.32_{\pm 6.70}}$ 
  & $\underline{91.77_{\pm 1.33}}$
  & $36.54_{\pm 0.14}$ & $\mathbf{39.21_{\pm 4.30}}$
  & $67.32_{\pm 1.28}$ &{\quad}
  & $\underline{65.28_{\pm 1.53}}$ & $20.14_{\pm 2.11}$ \\

\quad KLDA~\cite{momeni2025continual}
  & $39.06_{\pm 1.55}$ & $33.26_{\pm 1.15}$ & $27.34_{\pm 1.10}$ 
  & $64.01_{\pm 2.90}$ 
  & $90.94_{\pm 0.28}$ 
  & $\underline{38.64_{\pm 1.43}}$
  & $33.27_{\pm 7.29}$
  & $\underline{68.63_{\pm 0.92}}$&{\quad} 
  & $32.89_{\pm 0.74}$ & $22.53_{\pm 1.25}$ \\

\quad \textsc{HyCal}(Ours)
  & $\mathbf{73.33_{\pm 3.32}}$ & $\mathbf{56.56_{\pm 3.97}}$ & $\mathbf{51.28_{\pm 2.75}}$
  & $\underline{76.76_{\pm 5.23}}$  
  & $\mathbf{95.76_{\pm 0.51}}$ 
  & $\mathbf{41.01_{\pm 0.76}}$
  & $\underline{38.67_{\pm 5.34}}$
  & $\mathbf{69.41_{\pm 1.01}}$ &{\quad}
  & $\mathbf{65.71_{\pm 1.17}}$ & $19.59_{\pm 1.13}$ 
\\\midrule\hline


\multicolumn{11}{l}{\textbf{Last Acc.}}\\

\quad Primal-RAIL~\cite{xu2024advancing}
  & \textcolor{gray}{$68.99_{\pm 5.05}$}
  & \textcolor{gray}{$51.98_{\pm 3.55}$}
  & \textcolor{gray}{$59.91_{\pm 0.75}$}
  & \textcolor{gray}{$77.36_{\pm 6.73}$}
  & \textcolor{gray}{$90.42_{\pm 0.72}$}
  & \textcolor{gray}{$37.50_{\pm 0.73}$}
  & \textcolor{gray}{$20.28_{\pm 9.73}$}
  & \textcolor{gray}{$67.77_{\pm 1.44}$}&{\quad}
  & \textcolor{gray}{$59.28_{\pm 1.57}$}
  & \textcolor{gray}{$22.50_{\pm 2.69}$}
\\\hdashline

\quad FeCAM~\cite{goswami2023fecam}
  & $0.00_{\pm 0.00}$ & $0.00_{\pm 0.00}$ & $0.00_{\pm 0.00}$ & $0.00_{\pm 0.00}$
  & $1.12_{\pm 3.21}$ & $11.90_{\pm 1.72}$ & $0.00_{\pm 0.00}$ & $31.24_{\pm 8.07}$&{\quad}
  & $5.53_{\pm 1.31}$ & $11.21_{\pm 2.58}$ \\

\quad RanPAC~\cite{mcdonnell2023ranpac}
  & $70.02_{\pm 4.13}$ & $54.34_{\pm 3.54}$ & \underline{$41.03_{\pm 3.95}$}
  & $77.57_{\pm 7.28}$ 
  & $91.90_{\pm 1.03}$
  & $37.08_{\pm 0.81}$ & $\mathbf{39.08_{\pm 4.09}}$
  & $67.32_{\pm 1.28}$&{\quad}
  & $59.79_{\pm 1.38}$ & $20.20_{\pm 1.99}$ \\

\quad KLDA~\cite{momeni2025continual}
  & $\mathbf{75.43_{\pm 3.99}}$ 
  & $\underline{57.80_{\pm 2.93}}$ 
  & $37.35_{\pm 2.72}$
  & $\mathbf{78.22_{\pm 4.61}}$ 
  & $\underline{95.33_{\pm 0.58}}$
  & $\underline{39.15_{\pm 1.15}}$
  & $33.44_{\pm 7.53}$ 
  & $\underline{68.63_{\pm 0.92}}$&{\quad}
  & $\underline{60.67_{\pm 1.13}}$ & $22.59_{\pm 1.50}$ \\

\quad \textsc{HyCal}(Ours)
  & $\underline{74.01_{\pm 3.21}}$ 
  & $\mathbf{58.46_{\pm 4.30}}$
  & $\mathbf{49.65_{\pm 2.44}}$ 
  & $\underline{77.69_{\pm 5.15}}$ 
  & $\mathbf{95.71_{\pm 0.53}}$ 
  & $\mathbf{40.95_{\pm 0.74}}$
  & $\underline{38.18_{\pm 5.02}}$
  & $\mathbf{69.41_{\pm 1.01}}$ &{\quad}
  & $\mathbf{63.01_{\pm 1.08}}$ & $19.89_{\pm 1.11}$ \\

\midrule \midrule

\rowcolor{gray!24}
\multicolumn{12}{c}{\textit{Random Order III}} 
\\\midrule
& \rotatebox{0}{EuroSAT}
& \rotatebox{0}{Galaxy}
& \rotatebox{0}{Organ}
& \rotatebox{0}{Flower}
& \rotatebox{0}{DTD}
& \rotatebox{0}{MNIST}
& \rotatebox{0}{Aircraft}
& \rotatebox{0}{ArtBench}&{\quad}
& \rotatebox{0}{\textbf{Average}}
& \rotatebox{0}{$\boldsymbol{\sigma}$} \\ \toprule

\multicolumn{1}{l}{\quad Zero-shot}
  & 37.58 & 9.80 & 17.97 & 67.40 & 41.90 & 44.01 & 23.91 & 50.88&{\quad} & 56.41 & 18.76
\\\midrule\hline


\multicolumn{11}{l}{\textbf{Average Acc.}}\\

\quad Primal-RAIL~\cite{xu2024advancing}
  & \textcolor{gray}{$72.40_{\pm 4.22}$}
  & \textcolor{gray}{$25.96_{\pm 2.91}$}
  & \textcolor{gray}{$54.38_{\pm 1.74}$}
  & \textcolor{gray}{$91.44_{\pm 0.46}$}
  & \textcolor{gray}{$68.09_{\pm 1.01}$}
  & \textcolor{gray}{$80.68_{\pm 3.75}$}
  & \textcolor{gray}{$37.02_{\pm 0.93}$}
  & \textcolor{gray}{$59.24_{\pm 1.67}$}&{\quad}
  & \textcolor{gray}{$61.48_{\pm 1.27}$}
  & \textcolor{gray}{$21.89_{\pm 0.96}$}
\\\hdashline

\quad FeCAM~\cite{goswami2023fecam}
  & $5.44_{\pm 1.07}$ & $4.53_{\pm 0.67}$ & $3.20_{\pm 2.38}$ & $6.24_{\pm 1.93}$
  & $35.83_{\pm 9.74}$ & $0.00_{\pm 0.00}$ & $11.60_{\pm 2.93}$ & $0.00_{\pm 0.00}$&{\quad}
  & $10.15_{\pm 1.25}$ & $11.76_{\pm 3.11}$ \\

\quad RanPAC~\cite{mcdonnell2023ranpac}
  & $\underline{72.72_{\pm 1.99}}$
  & $\mathbf{39.72_{\pm 3.65}}$
  & $\underline{56.02_{\pm 3.30}}$
  & $\underline{91.60_{\pm 0.42}}$
  & $65.82_{\pm 0.36}$
  & $78.17_{\pm 4.58}$
  & ${35.60_{\pm 0.78}}$
  & $\underline{39.72_{\pm 2.28}}$&{\quad}
  & $\underline{63.91_{\pm 1.21}}$
  & $20.60_{\pm 0.55}$ \\
\quad KLDA
  & $46.86_{\pm 1.94}$
  & $23.35_{\pm 1.93}$
  & $46.52_{\pm 3.05}$
  & $86.14_{\pm 2.78}$
  & $\underline{67.35_{\pm 1.40}}$
  & $\mathbf{81.35_{\pm 2.06}}$
  & $\underline{39.46_{\pm 0.66}}$
  & $37.17_{\pm 2.54}$
  &{\quad}
  & $39.60_{\pm 1.12}$
  & $22.38_{\pm 0.93}$ \\

\quad \textsc{HyCal}(Ours)
  & $\mathbf{73.81_{\pm 3.36}}$
  & $\underline{38.20_{\pm 5.59}}$
  & $\mathbf{58.84_{\pm 1.22}}$
  & $\mathbf{95.64_{\pm 0.65}}$
  & $\mathbf{69.54_{\pm 0.60}}$
  & $\underline{79.06_{\pm 3.41}}$
  & $\mathbf{41.43_{\pm 1.16}}$
  & $\mathbf{48.51_{\pm 1.85}}$&{\quad}
  & $\mathbf{64.73_{\pm 1.70}}$
  & $20.00_{\pm 1.69}$ 
\\\midrule\hline


\multicolumn{11}{l}{\textbf{Last Acc.}}\\

\quad Primal-RAIL~\cite{xu2024advancing}
  & \textcolor{gray}{$69.10_{\pm 5.00}$}
  & \textcolor{gray}{$19.83_{\pm 2.46}$}
  & \textcolor{gray}{$53.43_{\pm 1.22}$}
  & \textcolor{gray}{$90.40_{\pm 0.49}$}
  & \textcolor{gray}{$67.51_{\pm 0.90}$}
  & \textcolor{gray}{$80.49_{\pm 3.89}$}
  & \textcolor{gray}{$36.97_{\pm 0.94}$}
  & \textcolor{gray}{$59.24_{\pm 1.67}$}&{\quad}
  & \textcolor{gray}{$59.62_{\pm 0.55}$}
  & \textcolor{gray}{$22.94_{\pm 1.03}$}
\\\hdashline

\quad FeCAM~\cite{goswami2023fecam}
  & $0.00_{\pm 0.00}$ & $0.00_{\pm 0.00}$ & $0.00_{\pm 0.00}$ & $0.57_{\pm 0.51}$
  & $35.83_{\pm 9.74}$ & $0.00_{\pm 0.00}$ & $11.60_{\pm 2.93}$ & $0.00_{\pm 0.00}$&{\quad}
  & $6.00_{\pm 1.11}$ & $12.75_{\pm 3.18}$ \\

\quad RanPAC~\cite{mcdonnell2023ranpac}
  & ${70.36_{\pm 4.04}}$
  & $\mathbf{39.95_{\pm 4.09}}$
  & $55.10_{\pm 3.53}$
  & $91.92_{\pm 0.68}$
  & $66.92_{\pm 0.68}$
  & $77.37_{\pm 5.98}$
  & $36.84_{\pm 1.31}$
  & $\underline{39.72_{\pm 2.28}}$&{\quad}
  & $59.77_{\pm 1.51}$
  & $20.24_{\pm 1.12}$ \\
\quad KLDA
  & $\mathbf{75.57_{\pm 4.02}}$
  & $35.46_{\pm 3.58}$
  & $\underline{57.51_{\pm 3.50}}$
  & $\underline{95.10_{\pm 1.51}}$
  & $\underline{68.11_{\pm 1.35}}$
  & $\mathbf{80.69_{\pm 2.11}}$
  & $\underline{39.44_{\pm 0.81}}$
  & $37.17_{\pm 2.54}$
  &{\quad}
  & $\underline{61.13_{\pm 0.52}}$
  & $22.44_{\pm 0.80}$ \\

\quad \textsc{HyCal}(Ours)
  & $\underline{74.01_{\pm 3.22}}$
  & $\underline{38.09_{\pm 5.88}}$
  & $\mathbf{58.81_{\pm 1.27}}$
  & $\mathbf{95.62_{\pm 0.64}}$
  & $\mathbf{69.65_{\pm 0.65}}$
  & $\underline{79.07_{\pm 3.42}}$
  & $\mathbf{41.44_{\pm 1.16}}$
  & $\mathbf{48.51_{\pm 1.85}}$&{\quad}
  & $\mathbf{63.15_{\pm 0.74}}$
  & $20.04_{\pm 1.76}$ \\

\midrule\midrule
\rowcolor{gray!24}
\multicolumn{12}{c}{\textit{Random order IV}} 
\\\midrule
& \rotatebox{0}{ArtBench}
& \rotatebox{0}{MNIST}
& \rotatebox{0}{Organ}
& \rotatebox{0}{Aircraft}
& \rotatebox{0}{EuroSAT}
& \rotatebox{0}{DTD}
& \rotatebox{0}{Galaxy}
& \rotatebox{0}{Flower}&{\quad}
& \rotatebox{0}{\textbf{Average}}
& \rotatebox{0}{$\boldsymbol{\sigma}$} \\ \toprule

\multicolumn{1}{l}{\quad Zero-shot}
  & $50.88$ & $44.01$ & $17.97$ & $23.91$ & $37.58$ & $41.90$ & $9.80$ & $67.40$&{\quad} & $56.41$ & $18.76$
\\\midrule\hline


\multicolumn{11}{l}{\textbf{Average Acc.}}\\

\quad Primal-RAIL
  & \textcolor{gray}{$60.55_{\pm 0.19}$}
  & \textcolor{gray}{$77.25_{\pm 5.23}$}
  & \textcolor{gray}{$55.28_{\pm 5.19}$}
  & \textcolor{gray}{$37.01_{\pm 0.37}$}
  & \textcolor{gray}{$74.29_{\pm 7.47}$}
  & \textcolor{gray}{$68.54_{\pm 0.96}$}
  & \textcolor{gray}{$24.15_{\pm 8.06}$}
  & \textcolor{gray}{$90.49_{\pm 0.32}$}&{\quad}
  & \textcolor{gray}{$61.58_{\pm 0.48}$}
  & \textcolor{gray}{$21.96_{\pm 1.96}$}
\\\hdashline

\quad FeCAM
  & $0.00_{\pm 0.00}$ 
  & $0.00_{\pm 0.00}$
  & $0.00_{\pm 0.00}$
  & $11.93_{\pm 0.00}$
  & $0.00_{\pm 0.00}$
  & $33.17_{\pm 3.67}$
  & $0.00_{\pm 0.00}$
  & $0.14_{\pm 0.20}$&{\quad}
  & $5.66_{\pm 0.40}$
  & $11.89_{\pm 1.12}$ \\

\quad RanPAC
  & $\underline{39.93_{\pm 4.15}}$
  & $\mathbf{79.06_{\pm 7.55}}$
  & $\underline{56.45_{\pm 6.11}}$
  & $\mathbf{36.22_{\pm 2.13}}$
  & $\underline{71.24_{\pm 3.93}}$
  & $67.45_{\pm 1.89}$
  & $\mathbf{39.71_{\pm 0.89}}$
  & $92.43_{\pm 0.80}$&{\quad}
  & $\underline{56.08_{\pm 2.20}}$
  & $20.77_{\pm 1.02}$ \\
\quad KLDA~\cite{momeni2025continual}
  & $21.56_{\pm 1.54}$
  
  & $49.22_{\pm 5.31}$
  & $40.30_{\pm 2.21}$
  & $27.68_{\pm 0.45}$
  & $70.05_{\pm 4.84}$
  & $\underline{68.01_{\pm 1.04}}$
  & $36.23_{\pm 5.53}$
  & $\underline{95.26_{\pm 1.04}}$ &{\quad}
  & $31.44_{\pm 1.53}$
  & $25.01_{\pm 1.09}$
  
  \\

\quad \textsc{HyCal} (Ours)
  & $\mathbf{50.94_{\pm 2.66}}$
  & $76.84_{\pm 5.18}$
  & $\mathbf{58.46_{\pm 3.45}}$
  & $\mathbf{41.21_{\pm 0.81}}$
  & $\mathbf{72.29_{\pm 4.85}}$
  & $\mathbf{69.56_{\pm 1.24}}$
  & $\underline{39.35_{\pm 4.00}}$
  & $\mathbf{95.62_{\pm 0.41}}$&{\quad}
  & $\mathbf{59.69_{\pm 1.70}}$
  & $19.29_{\pm 1.31}$
\\\midrule\hline


\multicolumn{11}{l}{\textbf{Last Acc.}}\\

\quad Primal-RAIL
  & \textcolor{gray}{$60.55_{\pm 0.08}$}
  & \textcolor{gray}{$76.69_{\pm 5.99}$}
  & \textcolor{gray}{$54.19_{\pm 4.98}$}
  & \textcolor{gray}{$36.60_{\pm 0.56}$}
  & \textcolor{gray}{$73.05_{\pm 7.54}$}
  & \textcolor{gray}{$67.75_{\pm 0.94}$}
  & \textcolor{gray}{$22.90_{\pm 7.91}$}
  & \textcolor{gray}{$90.49_{\pm 0.32}$}&{\quad}
  & \textcolor{gray}{$60.28_{\pm 0.61}$}
  & \textcolor{gray}{$22.17_{\pm 1.98}$}
\\\hdashline

\quad FeCAM
  & $8.56_{\pm 4.51}$
  & $0.00_{\pm 0.00}$
  & $0.95_{\pm 1.74}$
  & $12.31_{\pm 2.19}$
  & $0.00_{\pm 0.00}$
  & $33.19_{\pm 3.67}$
  & $0.00_{\pm 0.00}$
  & $0.14_{\pm 0.20}$&{\quad}
  & $8.64_{\pm 3.29}$
  & $11.67_{\pm 1.08}$ \\

\quad RanPAC
  & $\underline{38.00_{\pm 2.78}}$
  & $\underline{78.88_{\pm 7.08}}$
  & $57.08_{\pm 3.78}$
  & $37.43_{\pm 1.28}$
  & $72.07_{\pm 5.04}$
  & $67.26_{\pm 1.38}$
  & $\mathbf{39.35_{\pm 1.31}}$
  & ${92.43_{\pm 0.80}}$&{\quad}
  & $60.31_{\pm 0.63}$
  & $20.91_{\pm 0.94}$ \\

\quad KLDA~\cite{momeni2025continual}
  & $37.59_{\pm 4.24}$
  & $\mathbf{79.63_{\pm 4.88}}$
  & $\underline{58.08_{\pm 2.55}}$
  & $\underline{39.58_{\pm 0.75}}$
  & $\mathbf{72.63_{\pm 6.17}}$
  & $\underline{68.79_{\pm 0.94}}$
  & $36.33_{\pm 5.01}$
  & $\underline{95.26_{\pm 1.04}}$ &{\quad}
  & $\underline{60.99_{\pm 0.68}}$
  & $21.94_{\pm 1.24}$ \\
\quad \textsc{HyCal} (Ours)
  & $\mathbf{49.42_{\pm 2.29}}$
  & $77.68_{\pm 5.13}$
  & $\mathbf{59.06_{\pm 2.58}}$
  & $\mathbf{41.25_{\pm 0.85}}$
  & $\underline{72.40_{\pm 4.78}}$
  & $\mathbf{69.47_{\pm 1.25}}$
  & $\underline{39.30_{\pm 3.83}}$
  & $\mathbf{95.62_{\pm 0.41}}$&{\quad}
  & $\mathbf{63.03_{\pm 1.11}}$
  & $19.50_{\pm 1.27}$ \\
\bottomrule

\end{tabular}
}
\end{table*}

\subsection{Hyperparameter sensitivity}

\subsubsection{Sensitivity to fusion weights $\alpha$ and $\beta$}
We examine the influence of the hyperparameters $\alpha$ and $\beta$ by evaluating performance across the search ranges $\alpha \in \{1,10,20,40,60,80\}$ and $\beta \in \{0,5,10\}$. Overall, \textsc{HyCal} shows stable performance across this hyperparameter space, with calibration strength varying smoothly as the activation point and smoothness parameters change. The effect of varying $\alpha$ and $\beta$ is visualized in Fig.~\ref{fig:hyperparmeter}.

Across the examined range, performance is generally robust with respect to $\alpha$, indicating that the calibration activation point does not strongly influence the overall behavior of the method. In contrast, $\beta$ exhibits a clearer trend: larger values relax the threshold that determines when Mahalanobis distance becomes active, effectively reducing its contribution and allowing cosine similarity to dominate. As a result, performance consistently drops when moving from small values such as $\beta=0,5$ to higher values like $\beta=10$, indicating that weakening this threshold diminishes the benefit obtained from the Mahalanobis component.

\subsubsection{Sensitivity to covariance regularization ($\lambda$ and $\gamma$)}

We also examine the influence of the covariance-regularization parameters $\lambda$ and $\gamma$ in \cref{eq:reg_covariance}. 
For this analysis, we fix $\alpha = 10$ and $\beta = 5$, and evaluate the method under the High-scale domain imbalance setting. We use $\lambda = 10^{-4}$ and $\gamma = 1$ by default, and vary $\lambda$ from $10^{-1}$ to $10^{-5}$ and $\gamma$ from $1$ to $2 \times 10^{-3}$, as shown in \cref{fig:hyperparmeter_lambda_gamma}. 
Overall, performance remains stable across a wide range of moderate settings, indicating that \textsc{HyCal} does not require delicate tuning of these parameters.

\subsection{Fusion strategies}

We compare two image–text fusion strategies in \cref{tab:fusion}: summation, which preserves the 512-dimensional CLIP embedding, and concatenation, which corresponds to the 1024-dimensional representation. Across all settings, the sum-based fusion consistently achieves better performance than the concatenation fusion.

\section{Efficiency analysis}
Because \textsc{HyCal} keeps the pretrained backbone frozen and updates only class prototypes and regularized precision matrices for newly introduced classes, its computational cost scales with the number of new classes rather than with full model retraining. Unlike prior approaches that recompute statistics over all classes or domains in XD-VSCIL, \textsc{HyCal} computes and stores covariance-related statistics only for newly added classes while reusing previously constructed prototypes. This design yields linear growth in both memory and computation as the class set expands, making the method well suited to incremental deployment.

This efficiency is also reflected in practice. When 10 new classes are added, \textsc{HyCal} requires only 5.4 GFLOPs, whereas full retraining in Primal-RAIL requires 12 GFLOPs. The required storage remains moderate, amounting to 301 MB when maintaining statistics for $C = 300$ classes. In addition, prototype construction and covariance estimation are independent across classes, enabling straightforward parallelization across newly introduced categories.  Taken together, these properties make \textsc{HyCal} an efficient and scalable alternative for XD-VSCIL, especially in settings where new classes or domains must be incorporated quickly without repeated parameter optimization.

\section{Detailed experimental setting}
\label{sec:experimental_setting}

\subsection{Implementation details}

We use the Vision Transformer (ViT-B/16) model with a frozen CLIP text encoder for all experiments. The model weights are loaded from the \texttt{openai/clip-vit-base-patch16} checkpoint. The image encoder's parameters are kept frozen throughout all experiments. All experiments were conducted using PyTorch version 2.1.1. All training and evaluation runs were performed on a single NVIDIA GeForce RTX 3090 GPU. All images are resized to $224 \times 224$ and normalized using standard CLIP statistics.

\textsc{HyCal} is implemented on frozen \texttt{CLIP ViT-B/16}~\cite{CLIP} features using the original 512-dimensional embedding, whereas Primal-RAIL and other training-free baselines operate on 1024-dimensional embeddings. 
This setup ensures a fair comparison by contrasting \textsc{HyCal}'s training-free, low-dimensional configuration against both training-free and fully trainable alternatives.

To ensure the robustness of our results, all experiments were run four times with different random seeds: \{0, 1, 42, 1993\}. These seeds control the initialization of any trainable parameters, the selection of few-shot examples, and the order of data shuffling during training. The results reported in the main paper are the mean and standard deviation over these four runs.

\textsc{HyCal} uses two fusion weights hyperparameters, $\alpha$ and $\beta$. 
We perform a lightweight grid search over $\alpha \in \{1,10,20,40,60,80\}$ and $\beta \in \{0,5,10\}$ and select the best configuration for each setting. 
For XD-VSCIL, we set $(\alpha,\beta)=(20,5)$ for the Balanced-in-class domain setting, $(60,5)$ for the Cross-scale imbalance setting, and $(10,5)$ for the High-scale domain imbalance setting. For FSCIL, we use $(1,0)$ in the 5-shot case to allow immediate activation under extremely sparse data, and $(10,5)$ for the 10-, 15-, and 20-shot settings. We use $\lambda=10^{-4}$ and $\gamma = 1$ in all reported experiments.

\subsection{Evaluation metrics}

We report three metrics: Last accuracy, Average accuracy, and the proposed $\text{S}_{\text{CDE}}$. As $\text{S}_{\text{CDE}}$ is formally defined and analyzed in the main text, we provide concise descriptions of the remaining two metrics. Last accuracy denotes the Top-1 accuracy on the final task after all incremental stages and reflects the model's ability to retain previously learned knowledge. Average accuracy is computed as the mean Top-1 accuracy across all tasks,
\[
\text{Avg Acc.} = \frac{1}{\mathcal{T}} \sum_{t=1}^{\mathcal{T}} \text{Acc}(t),
\]
where $\mathcal{T}$ is the total number of steps, and provides a summary of overall performance throughout the entire learning trajectory.

\subsection{XD-VSCIL settings}
We evaluate our training-free hybrid calibration method \textsc{HyCal} by comparing it against both training-free and trainable baselines. As training-free approaches, we include FeCAM~\cite{goswami2023fecam}, RanPAC~\cite{mcdonnell2023ranpac}, and KLDA~\cite{momeni2025continual}, which represent recent state-of-the-art methods relying solely on frozen visual encoders. 
For the trainable baseline, we adopt Primal-RAIL~\cite{xu2024advancing} and report its performance as a reference point for methods that fine-tune task-specific modules. 

The training-sample distributions for three XD-VSCIL settings are illustrated in Fig.~\ref{fig:exp_setting}(a)--(c), which highlight the distinct balance/imbalance patterns observed in each setting. In the Cross-scale imbalance setting, a variable number of training samples per class ($K_c$) is used, and the exact values depend on the random seed. For reference, \cref{tab:xd_vscil_k_1} and \cref{tab:xd_vscil_k_2} report the specific $K_c$ values employed when the random seed is set to 42.

\subsubsection{FSCIL setting.} 
We follow the standard few-shot evaluation protocol. For each dataset, we conduct experiments in 5, 10, 15, 20-shot settings. In a $K$-shot setting, we randomly sample $K$ images per class from the original training set.

\begin{table*}[!ht]
\caption{Per-class sample counts ($K_c$) for the XD-VSCIL setting on the Aircraft, ArtBench, DTD, and EuroSAT datasets, with \texttt{random\_seed=42}. Classes are sorted alphabetically.}
\label{tab:xd_vscil_k_1}
\centering
\setlength{\tabcolsep}{3mm}
\resizebox{0.98\textwidth}{!}{%
\begin{tabular}{l c l c l c l c}
\toprule\midrule
\multicolumn{8}{c}{\textbf{Aircraft (aeronautics)}~\cite{Aircraft}} \\
\midrule
Class Name & $K_c$ & Class Name & $K_c$ & Class Name & $K_c$ & Class Name & $K_c$ \\
\midrule
707-320 & 45 & 727-200 & 12 & 737-200 & 6 & 737-300 & 22 \\
737-400 & 20 & 737-500 & 19 & 737-600 & 13 & 737-700 & 11 \\
737-800 & 48 & 737-900 & 39 & 747-100 & 10 & 747-200 & 42 \\
747-300 & 32 & 747-400 & 7 & 757-200 & 6 & 757-300 & 10 \\
767-200 & 18 & 767-300 & 19 & 767-400 & 37 & 777-200 & 43 \\
777-300 & 6 & A300B4 & 40 & A310 & 17 & A318 & 50 \\
A319 & 46 & A320 & 49 & A321 & 39 & A330-200 & 31 \\
A330-300 & 19 & A340-200 & 33 & A340-300 & 42 & A340-500 & 22 \\
A340-600 & 5 & A380 & 15 & ATR-42 & 49 & ATR-72 & 32 \\
An-12 & 26 & BAE 146-200 & 22 & BAE 146-300 & 14 & BAE-125 & 18 \\
Beechcraft 1900 & 26 & Boeing 717 & 11 & C-130 & 10 & C-47 & 29 \\
CRJ-200 & 11 & CRJ-700 & 27 & CRJ-900 & 27 & Cessna 172 & 43 \\
Cessna 208 & 21 & Cessna 525 & 7 & Cessna 560 & 34 & Challenger 600 & 39 \\
DC-10 & 12 & DC-3 & 29 & DC-6 & 10 & DC-8 & 40 \\
DC-9-30 & 23 & DH-82 & 45 & DHC-1 & 44 & DHC-6 & 28 \\
DHC-8-100 & 41 & DHC-8-300 & 17 & DR-400 & 50 & Dornier 328 & 9 \\
E-170 & 7 & E-190 & 47 & E-195 & 19 & EMB-120 & 23 \\
ERJ 135 & 10 & ERJ 145 & 19 & Embraer Legacy 600 & 11 & Eurofighter Typhoon & 29 \\
F-16A/B & 22 & F/A-18 & 34 & Falcon 2000 & 45 & Falcon 900 & 28 \\
Fokker 100 & 15 & Fokker 50 & 28 & Fokker 70 & 27 & Global Express & 18 \\
Gulfstream IV & 47 & Gulfstream V & 22 & Hawk T1 & 49 & Il-76 & 48 \\
L-1011 & 46 & MD-11 & 9 & MD-80 & 43 & MD-87 & 45 \\
MD-90 & 15 & Metroliner & 39 & Model B200 & 20 & PA-28 & 15 \\
SR-20 & 34 & Saab 2000 & 29 & Saab 340 & 22 & Spitfire & 45 \\
Tornado & 49 & Tu-134 & 40 & Tu-154 & 19 & Yak-42 & 48 \\
\midrule\midrule
\multicolumn{8}{c}{\textbf{ArtBench (art)}~\cite{ArtBench-10}  } \\
\midrule
Class Name & $K_c$ & Class Name & $K_c$ & Class Name & $K_c$ & Class Name & $K_c$ \\
\midrule
Art Nouveau   & 46 & Impressionism      & 13 & Renaissance & 50 & Ukiyo-e    & 48 \\
Baroque       & 13 & Post-impressionism & 20 & Romanticism & 25 &            &    \\
Expressionism & 49 & Realism            & 14 & Surrealism  & 20 &            &    \\
\midrule\midrule
\multicolumn{8}{c}{\textbf{DTD (textures)}~\cite{DTD}  } \\
\midrule
Class Name & $K_c$ & Class Name & $K_c$ & Class Name & $K_c$ & Class Name & $K_c$ \\
\midrule
banded       & 30 & flecked     & 10 & matted       & 34 & sprinkled  & 22 \\
blotchy      & 36 & freckled    & 30 & meshed       & 45 & stained    & 25 \\
braided      & 14 & frilly      & 37 & paisley      & 47 & stratified & 23 \\
bubbly       & 41 & gauzy       & 22 & perforated   & 49 & striped    & 26 \\
bumpy        & 45 & grid        & 49 & pitted       & 27 & studded    & 40 \\
chequered    & 22 & grooved     & 28 & pleated      & 9  & swirly     & 41 \\
cobwebbed    & 45 & honeycombed & 34 & polka-dotted & 39 & veined     & 13 \\
cracked      & 31 & interlaced  & 36 & porous       & 30 & waffled    & 41 \\
crosshatched & 29 & knitted     & 25 & potholed     & 18 & woven      & 36 \\
crystalline  & 33 & lacelike    & 42 & scaly        & 32 & wrinkled   & 26 \\
dotted       & 9  & lined       & 5  & smeared      & 18 & zigzagged  & 48 \\
fibrous      & 45 & marbled     & 10 & spiralled    & 36 &            &   \\ 

\midrule\midrule
\multicolumn{8}{c}{\textbf{EuroSAT (remote sensing)}~\cite{EuroSAT} } \\
\midrule
Class Name & $K_c$ & Class Name & $K_c$ & Class Name & $K_c$ & Class Name & $K_c$ \\
\midrule
Annual Crop Land           & 26 & Highway or Road      & 19 & Permanent Crop Land   & 40 & Sea or Lake & 35 \\
Forest                        & 28 & Industrial Buildings  & 34 & Residential Buildings  & 44 &                                &    \\
Herbaceous Vegetation Land  & 8  & Pasture Land          & 26 & River                  & 43 &                                &    \\
\bottomrule
\end{tabular}
}
\end{table*}

\begin{table*}[!ht]
\centering

\caption{Per-class sample counts ($K_c$) for the XD-VSCIL setting on the Galaxy, MNIST, OrganMNIST, and OxfordFlowers, with \texttt{random\_seed=42}. Classes are sorted alphabetically.}
\label{tab:xd_vscil_k_2}
\setlength{\tabcolsep}{3mm}
\resizebox{0.98\textwidth}{!}{%
\begin{tabular}{l c l c l c l c}
\toprule\midrule
\multicolumn{8}{c}{\textbf{Galaxy (astronomy)}~\cite{Galaxy10}  } \\
\midrule
Class Name & $K_c$ & Class Name & $K_c$ & Class Name & $K_c$ & Class Name & $K_c$ \\
\midrule
Disturbed Galaxies    & 7  & \begin{tabular}[c]{@{}l@{}}In-between Round\\ Smooth Galaxies\end{tabular}  & 27 & \begin{tabular}[c]{@{}l@{}}Unbarred Tight\\ Spiral Galaxies\end{tabular}  & 23 & \begin{tabular}[c]{@{}l@{}}Edge-on Galaxies\\ with Bulge\end{tabular}  & 43 \\
Merging Galaxies      & 49 & \begin{tabular}[c]{@{}l@{}}Cigar Shaped\\ Smooth Galaxies\end{tabular}     & 37 & \begin{tabular}[c]{@{}l@{}}Unbarred Loose\\ Spiral Galaxies\end{tabular}  & 48 &                             &    \\
\begin{tabular}[c]{@{}l@{}}Round Smooth\\ Galaxies\end{tabular}  & 9  & \begin{tabular}[c]{@{}l@{}} Barred Spiral\\ Galaxies\end{tabular}  & 11 & \begin{tabular}[c]{@{}l@{}} Edge-on Galaxies\\ without Bulge \end{tabular}   & 24 &                             &    \\

\midrule\midrule
\multicolumn{8}{c}{\textbf{MNIST(fundamental visual perception)}~\cite{MNIST} } \\
\midrule
Class Name & $K_c$ & Class Name & $K_c$ & Class Name & $K_c$ & Class Name & $K_c$ \\
\midrule
0                     & 9  & 3                                & 29 & 6                              & 6  & 9                           & 37 \\
1                     & 19 & 4                                & 13 & 7                              & 17 &                             &    \\
2                     & 29 & 5                                & 11 & 8                              & 38 &                             &    \\

\midrule\midrule
\multicolumn{8}{c}{\textbf{OrganMNIST (medical imaging)}~\cite{OrganMNIST2} } \\
\midrule
Class Name & $K_c$ & Class Name & $K_c$ & Class Name & $K_c$ & Class Name & $K_c$ \\
\midrule
liver                 & 48 & pancreas                         & 43 & bladder                        & 35 & right femur                 & 7  \\
right lung            & 31 & left lung                        & 11 & spleen                         & 44 & left femur                  & 39 \\
right kidney          & 14 & left kidney                      & 16 & heart                          & 43 &                             &    \\

\midrule\midrule
\multicolumn{8}{c}{\textbf{OxfordFlowers (fine-grained biology)}~\cite{Flowers102} } \\
\midrule
Class Name & $K_c$ & Class Name & $K_c$ & Class Name & $K_c$ & Class Name & $K_c$ \\
\midrule
passion flower        & 42 & fritillary                       & 16 & sunflower                      & 24 & pink primrose               & 12 \\
water lily            & 11 & sweet william                    & 41 & magnolia                       & 21 & fire lily                   & 27 \\
cyclamen              & 25 & azalea                           & 13 & osteospermum                   & 40 & red ginger                  & 48 \\
watercress            & 28 & primula                          & 41 & garden phlox                   & 41 & prince of wales feathers    & 43 \\
frangipani            & 44 & cape flower                      & 48 & sweet pea                      & 8  & carnation                   & 17 \\
wallflower            & 21 & purple coneflower                & 19 & daffodil                       & 30 & mexican aster               & 41 \\
rose                  & 16 & colt's foot                      & 11 & king protea                    & 11 & alpine sea holly            & 41 \\
petunia               & 40 & artichoke                        & 22 & great masterwort               & 49 & siam tulip                  & 46 \\
poinsettia            & 21 & wild pansy                       & 20 & black-eyed susan               & 15 & spring crocus               & 46 \\
clematis              & 20 & peruvian lily                    & 40 & bearded iris                   & 8  & globe thistle               & 39 \\
hibiscus              & 22 & ruby-lipped cattleya             & 32 & windflower                     & 18 & bolero deep blue            & 16 \\
lotus                 & 35 & canna lily                       & 29 & ball moss                      & 21 & tiger lily                  & 23 \\
anthurium             & 44 & gazania                          & 17 & spear thistle                  & 35 & moon orchid                 & 27 \\
thorn apple           & 26 & lenten rose                      & 14 & silverbush                     & 45 & gaura                       & 25 \\
barbeton daisy        & 28 & buttercup                        & 14 & balloon flower                 & 11 & japanese anemone            & 48 \\
sword lily            & 16 & pelargonium                      & 35 & oxeye daisy                    & 33 & foxglove                    & 37 \\
morning glory         & 36 & desert-rose                      & 5  & cautleya spicata               & 26 & bougainvillea               & 36 \\
columbine             & 32 & hippeastrum                      & 16 & common dandelion               & 21 & camellia                    & 33 \\
geranium              & 18 & giant white arum lily            & 31 & yellow iris                    & 28 & mallow                      & 10 \\
bishop of llandaff    & 31 & marigold                         & 36 & monkshood                      & 28 & mexican petunia             & 29 \\
tree mallow           & 49 & orange dahlia                    & 13 & love in the mist               & 40 & bromelia                    & 44 \\
pink-yellow dahlia    & 28 & hard-leaved pocket orchid        & 45 & corn poppy                     & 40 & blanket flower              & 23 \\
bee balm              & 28 & english marigold                 & 38 & grape hyacinth                 & 21 & trumpet creeper             & 34 \\
snapdragon            & 30 & stemless gentian                 & 39 & canterbury bells               & 27 & blackberry lily             & 45 \\
californian poppy     & 22 & tree poppy                       & 25 & globe-flower                   & 6  &                             &    \\
bird of paradise      & 33 & pincushion flower                & 36 & toad lily                      & 31 &                             &   \\
\bottomrule
\end{tabular}
}
\end{table*}

\section{Numerical results of Balanced-in-class domain and Cross-scale imbalance settings}
\label{sec:result_bal_cross}
For completeness, we provide the numerical values for the Balanced-in-class domain and Cross-scale imbalance settings. The Balanced-in-Class Domain setting results are provided in \cref{tab:V1}, and the Cross-scale imbalance setting results are summarized in \cref{tab:high_scale_suppl}.

\begin{table*}[t]
\caption{Performance under Balanced-in-class domain setting.
Average and Last accuracy (\%) across 8 domains.
Results are averaged over 4 seeds with 95\% confidence intervals in subscripts. Domain names are abbreviated.}
\label{tab:V1}
\centering
\setlength{\tabcolsep}{0.9mm}
\resizebox{0.98\textwidth}{!}{%
\begin{tabular}{l c c c c c c c c c c c}
\toprule\midrule
\rowcolor{gray!8}
\multicolumn{12}{c}{\textbf{Balanced-in-Class Domain}} \\
\midrule
& Aircraft & ArtBench & DTD & EuroSAT & Galaxy & MNIST & Organ & Flower &{\quad}&  \textbf{Average} & $\boldsymbol{\sigma}$ \\
\toprule

\multicolumn{1}{l}{\quad Zero-shot}
  & $23.91$ & $50.88$ & $41.90$ & $37.58$ & $9.80$ & $44.01$ & $17.97$ & $67.40$  &{\quad}& $56.41$ & $18.76$
\\\midrule\hline

\multicolumn{12}{l}{\textbf{Average Acc.}}\\

\quad Primal-RAIL~\cite{xu2024advancing}
  & \textcolor{gray}{$32.64_{\pm 0.88}$}
  & \textcolor{gray}{$60.60_{\pm 0.67}$}
  & \textcolor{gray}{$64.54_{\pm 0.68}$}
  & \textcolor{gray}{$88.16_{\pm 0.73}$}
  & \textcolor{gray}{$51.55_{\pm 1.93}$}
  & \textcolor{gray}{$91.58_{\pm 0.70}$}
  & \textcolor{gray}{$67.94_{\pm 2.90}$}
  & \textcolor{gray}{$86.85_{\pm 0.56}$} &{\quad}
  & \textcolor{gray}{$56.57_{\pm 0.55}$}
  & \textcolor{gray}{$20.38_{\pm 0.46}$}
\\\hdashline

\quad FeCAM~\cite{goswami2023fecam}
  & $3.94_{\pm 0.67}$ & $40.69_{\pm 2.23}$ & $3.60_{\pm 2.67}$ & $17.51_{\pm 9.16}$
  & $17.46_{\pm 7.04}$ & $0.47_{\pm 1.14}$ & $9.72_{\pm 10.18}$ & $0.00_{\pm 0.00}$  &{\quad}
  & $14.88_{\pm 1.39}$ & $13.96_{\pm 0.63}$ \\

\quad RanPAC~\cite{mcdonnell2023ranpac}
  & $\underline{30.18_{\pm 1.99}}$
  & $\underline{58.88_{\pm 1.22}}$
  & $59.20_{\pm 2.93}$
  & $87.26_{\pm 1.57}$
  & $\mathbf{57.60_{\pm 1.11}}$
  & $93.85_{\pm 0.82}$
  & $72.28_{\pm 1.90}$
  & $87.26_{\pm 1.14}$ &{\quad}
  & $\underline{55.60_{\pm 1.10}}$
  & $21.15_{\pm 0.81}$ \\

\quad KLDA~\cite{momeni2025continual}
  & $24.63_{\pm 0.50}$
  & $50.53_{\pm 0.54}$
  & $\underline{60.76_{\pm 1.23}}$
  & $\mathbf{90.44_{\pm 0.53}}$
  & $\underline{55.85_{\pm 2.74}}$
  & $\underline{95.04_{\pm 0.66}}$
  & $\underline{74.94_{\pm 1.07}}$
  & $\underline{92.85_{\pm 0.53}}$ &{\quad}
  & $46.57_{\pm 0.62}$
  & $24.75_{\pm 0.38}$ \\

\quad \textsc{HyCal}(Ours)
  & $\mathbf{34.74_{\pm 1.69}}$
  & $\mathbf{60.41_{\pm 0.59}}$
  & $\mathbf{64.54_{\pm 1.03}}$
  & $\underline{89.25_{\pm 0.24}}$
  & $51.65_{\pm 1.80}$
  & $\mathbf{95.14_{\pm 0.49}}$
  & $\mathbf{76.36_{\pm 1.14}}$
  & $\mathbf{93.07_{\pm 0.75}}$ &{\quad}
  & $\mathbf{57.75_{\pm 0.45}}$
  & $21.63_{\pm 0.57}$
\\\midrule\hline

\multicolumn{11}{l}{\textbf{Last Acc.}}\\

\quad Primal-RAIL~\cite{xu2024advancing}
  & \textcolor{gray}{$31.77_{\pm 0.54}$}
  & \textcolor{gray}{$60.67_{\pm 0.56}$}
  & \textcolor{gray}{$62.56_{\pm 0.97}$}
  & \textcolor{gray}{$87.89_{\pm 0.80}$}
  & \textcolor{gray}{$50.70_{\pm 2.29}$}
  & \textcolor{gray}{$91.32_{\pm 0.83}$}
  & \textcolor{gray}{$67.73_{\pm 2.91}$}
  & \textcolor{gray}{$86.85_{\pm 0.56}$} &{\quad}
  & \textcolor{gray}{$67.44_{\pm 0.68}$}
  & \textcolor{gray}{$20.67_{\pm 0.44}$}
\\\hdashline

\quad FeCAM~\cite{goswami2023fecam}
  & $3.05_{\pm 0.93}$ & $40.69_{\pm 2.23}$ & $3.60_{\pm 2.67}$ & $17.51_{\pm 9.16}$
  & $17.46_{\pm 7.04}$ & $0.47_{\pm 1.14}$ & $9.72_{\pm 10.18}$ & $0.00_{\pm 0.00}$ &{\quad}
  & $11.56_{\pm 1.94}$ & $14.04_{\pm 0.62}$ \\

\quad RanPAC~\cite{mcdonnell2023ranpac}
  & $29.03_{\pm 1.88}$
  & $52.14_{\pm 24.34}$
  & $58.27_{\pm 2.60}$
  & $86.05_{\pm 2.59}$
  & $\mathbf{56.75_{\pm 1.44}}$
  & $92.83_{\pm 1.67}$
  & $70.92_{\pm 3.06}$
  & $87.26_{\pm 1.14}$ &{\quad}
  & $67.64_{\pm 0.62}$
  & $22.16_{\pm 3.21}$ \\

\quad KLDA~\cite{momeni2025continual}
  & $\underline{33.40_{\pm 1.00}}$
  & $\mathbf{61.49_{\pm 0.62}}$
  & $\underline{62.34_{\pm 1.56}}$
  & $\mathbf{90.13_{\pm 0.36}}$
  & $\underline{55.69_{\pm 2.79}}$
  & $\underline{94.78_{\pm 0.64}}$
  & $\underline{74.66_{\pm 1.17}}$
  & $\underline{92.85_{\pm 0.53}}$ &{\quad}
  & $\mathbf{70.67_{\pm 0.90}}$
  & $21.50_{\pm 0.57}$ \\

\quad \textsc{HyCal}(Ours)
  & $\mathbf{34.77_{\pm 1.72}}$
  & $\underline{60.70_{\pm 0.41}}$
  & $\mathbf{63.67_{\pm 1.21}}$
  & $\underline{89.22_{\pm 0.24}}$
  & $51.34_{\pm 1.59}$
  & $\mathbf{95.16_{\pm 0.47}}$
  & $\mathbf{76.41_{\pm 1.11}}$
  & $\mathbf{93.07_{\pm 0.75}}$ &{\quad}
  & $\underline{70.54_{\pm 0.23}}$
  & $21.68_{\pm 0.55}$
\\
\bottomrule

\end{tabular}
}
\end{table*}

\begin{table*}[t]
\caption{Performance under Cross-scale imbalance setting.
Average and Last accuracy (\%) across 8 domains. Results are averaged over 4 seeds with 95\% confidence intervals in subscripts. Domain names are abbreviated.}
\label{tab:high_scale_suppl}
\centering
\setlength{\tabcolsep}{0.9mm}
\resizebox{0.98\textwidth}{!}{%
\begin{tabular}{l c c c c c c c c c c c}
\toprule\midrule
\rowcolor{gray!16}
\multicolumn{12}{c}{\textbf{Cross-Scale Imbalance}} \\
\midrule
& Aircraft & ArtBench & DTD & EuroSAT & Galaxy & MNIST & Organ & Flower &{\quad} & \textbf{Average} & $\boldsymbol{\sigma}$ \\
\toprule

\multicolumn{1}{l}{\quad Zero-shot}
  & $23.91$ & $50.88$ & $41.90$ & $37.58$ & $9.80$ & $44.01$ & $17.97$ & $67.40$ &{\quad} & $56.41$ & $18.76$
\\\midrule\hline


\multicolumn{12}{l}{\textbf{Average Acc.}}\\

\quad Primal\mbox{-}RAIL
  & \textcolor{gray}{$36.72_{\pm 1.63}$}
  & \textcolor{gray}{$59.77_{\pm 1.10}$}
  & \textcolor{gray}{$65.60_{\pm 2.81}$}
  & \textcolor{gray}{$76.42_{\pm 3.42}$}
  & \textcolor{gray}{$36.49_{\pm 8.47}$}
  & \textcolor{gray}{$57.61_{\pm 2.17}$}
  & \textcolor{gray}{$53.59_{\pm 3.54}$}
  & \textcolor{gray}{$87.71_{\pm 3.18}$} &{\quad}
  & \textcolor{gray}{$53.12_{\pm 0.71}$}
  & \textcolor{gray}{$17.85_{\pm 2.14}$}
\\\hdashline

\quad FeCAM
  & $7.84_{\pm 5.90}$ & $29.20_{\pm 9.53}$ & $20.46_{\pm 2.70}$ & $35.55_{\pm 9.63}$
  & $15.42_{\pm 14.00}$ & $12.99_{\pm 17.42}$ & $18.38_{\pm 25.53}$ & $8.13_{\pm 4.92}$ &{\quad}
  & $19.14_{\pm 4.82}$ & $11.92_{\pm 2.54}$ \\

\quad RanPAC
  & $37.75_{\pm 3.73}$
  & $51.65_{\pm 2.78}$
  & $63.09_{\pm 5.28}$
  & $76.03_{\pm 6.98}$
  & $45.11_{\pm 5.95}$
  & $78.20_{\pm 12.34}$
  & $64.02_{\pm 7.98}$
  & $87.30_{\pm 7.11}$ &{\quad}
  & $53.43_{\pm 3.94}$
  & $17.40_{\pm 2.76}$ \\

\quad KLDA
  & $\underline{43.11_{\pm 0.85}}$
  & $\mathbf{55.68_{\pm 1.12}}$
  & $\underline{66.66_{\pm 3.95}}$
  & $\mathbf{85.61_{\pm 1.82}}$
  & $\underline{48.36_{\pm 5.27}}$
  & $\underline{89.17_{\pm 2.10}}$
  & $\underline{70.04_{\pm 1.51}}$
  & $\mathbf{96.11_{\pm 0.65}}$ &{\quad}
  & $\underline{58.60_{\pm 0.59}}$
  & $19.68_{\pm 0.69}$ \\

\quad \textsc{HyCal}(Ours)
  & $\mathbf{44.72_{\pm 1.91}}$
  & $\underline{54.09_{\pm 1.67}}$
  & $\mathbf{66.73_{\pm 2.16}}$
  & $\underline{82.30_{\pm 2.05}}$
  & $\mathbf{48.84_{\pm 2.90}}$
  & $\mathbf{89.72_{\pm 0.81}}$
  & $\mathbf{70.83_{\pm 3.35}}$
  & $\underline{95.56_{\pm 0.68}}$ &{\quad}
  & $\mathbf{58.71_{\pm 0.99}}$
  & $19.07_{\pm 0.70}$ \\
\midrule\hline


\multicolumn{12}{l}{\textbf{Last Acc.}}\\

\quad Primal\mbox{-}RAIL
  & \textcolor{gray}{$35.78_{\pm 1.89}$}
  & \textcolor{gray}{$59.68_{\pm 1.08}$}
  & \textcolor{gray}{$63.80_{\pm 2.79}$}
  & \textcolor{gray}{$75.47_{\pm 3.44}$}
  & \textcolor{gray}{$35.47_{\pm 9.42}$}
  & \textcolor{gray}{$56.32_{\pm 2.15}$}
  & \textcolor{gray}{$52.95_{\pm 3.64}$}
  & \textcolor{gray}{$87.71_{\pm 3.18}$} &{\quad}
  & \textcolor{gray}{$58.40_{\pm 0.86}$}
  & \textcolor{gray}{$18.06_{\pm 2.21}$}
\\\hdashline

\quad FeCAM
  & $6.32_{\pm 5.01}$ & $26.79_{\pm 10.63}$ & $19.28_{\pm 2.80}$ & $35.37_{\pm 10.16}$
  & $15.40_{\pm 13.96}$ & $12.99_{\pm 17.42}$ & $18.38_{\pm 25.53}$ & $8.13_{\pm 4.92}$ &{\quad}
  & $17.83_{\pm 5.73}$ & $11.79_{\pm 2.92}$ \\

\quad RanPAC
  & $36.98_{\pm 4.30}$
  & $51.41_{\pm 3.23}$
  & $62.19_{\pm 5.63}$
  & $74.71_{\pm 7.99}$
  & $44.69_{\pm 6.36}$
  & $75.78_{\pm 13.80}$
  & $63.11_{\pm 8.14}$
  & $87.30_{\pm 7.11}$ &{\quad}
  & $62.02_{\pm 5.68}$
  & $17.23_{\pm 2.72}$ \\

\quad KLDA
  & $\underline{43.33_{\pm 0.95}}$
  & $\mathbf{56.34_{\pm 0.91}}$
  & $\mathbf{66.43_{\pm 3.75}}$
  & $\mathbf{84.92_{\pm 1.94}}$
  & $\mathbf{47.66_{\pm 5.18}}$
  & $\underline{88.54_{\pm 2.08}}$
  & $\underline{69.52_{\pm 1.44}}$
  & $\mathbf{96.11_{\pm 0.65}}$ &{\quad}
  & $\mathbf{69.11_{\pm 0.68}}$
  & $19.51_{\pm 0.69}$ \\

\quad \textsc{HyCal}(Ours)
  & $\mathbf{44.79_{\pm 1.73}}$
  & $\underline{54.00_{\pm 1.91}}$
  & $\underline{65.91_{\pm 1.99}}$
  & $\underline{82.27_{\pm 2.04}}$
  & $\mathbf{47.66_{\pm 2.26}}$  
  & $\mathbf{90.09_{\pm 0.88}}$
  & $\mathbf{70.97_{\pm 3.49}}$
  & $\underline{95.56_{\pm 0.68}}$ &{\quad}
  & $\underline{68.91_{\pm 0.78}}$
  & $19.33_{\pm 0.61}$ \\

\bottomrule

\end{tabular}
}
\end{table*}

\section{Limitations}
\textsc{HyCal} is designed for settings in which frozen pretrained representations are already sufficiently informative and where prototype-level calibration is preferable to backbone adaptation. Accordingly, its effectiveness depends on the quality of the underlying representation space. When newly arriving tasks come from domains that lie far outside the support of the pretrained model, or when image and text embeddings are strongly misaligned, prototype calibration alone may be insufficient to recover fully discriminative structure.

In addition, although the cosine similarity and regularized Mahalanobis term improve stability in few-shot regimes, covariance estimation can still become noisy when only extremely small numbers of samples are available or when class distributions are highly multimodal. Finally, \textsc{HyCal} deliberately avoids parameter updates in order to preserve efficiency and retention, which is advantageous in data-scarce continual settings, but may limit the attainable gains in scenarios where abundant in-domain data and larger adaptation budgets make trainable methods viable.


\end{document}